\newtheorem{definition}{Definition}
\newtheorem{lemma}{Lemma}
\newtheorem{theorem}{Theorem}
\newtheorem{corollary}{Corollary}
\begin{document}

\title{Sublinear Update Time Randomized Algorithms for
Dynamic Graph Regression}

\author{Mostafa Haghir Chehreghani \\
              Department of Computer Engineering \\
              Amirkabir University of Technology (Tehran Polytechnic), Iran \\
               mostafa.chehreghani@aut.ac.ir          
}

\date{}
\maketitle

\begin{abstract}
A well-known problem in data science and machine learning
is {\em linear regression}, which is recently
extended to dynamic graphs.
Existing exact algorithms for updating the solution
of dynamic graph regression require
at least a linear time
(in terms of $n$: the size of the graph).
However, this time complexity might be
intractable in practice.

In the current paper, we utilize
{\em subsampled randomized Hadamard transform}
and \textsf{CountSketch}
to propose the first sublinear update time
randomized algorithms for regression of general dynamic graphs.
Suppose that we are given a $n\times d$
matrix embedding $\mathbold M$
of the graph,
where $d \ll n$ and $\mathbold M$ has certain properties.
Let $r$ be the number of samples required
by subsampled randomized Hadamard transform
for a $1\pm \epsilon$ approximation,
which is a sublinear of $n$.
Our first algorithm
supports
edge insertion and edge deletion and
updates the
approximate solution
in $O(rd)$ time.
Our second algorithm
is based on \textsf{CountSketch}
and supports
edge insertion, edge deletion,
node insertion and node deletion.
It updates the approximate solution
in $O(qd)$ time,
where $q=O\left(\frac{d^2}{\epsilon^2} \log^6(d/\epsilon) \right)$.
\end{abstract}

\textbf{Keywords.}
Dynamic networks, dynamic graph regression, least squares regression, sublinear update time, subsampled randomized Hadamard transform, \textsf{CountSketch}



\section{Introduction}
\label{sec:introduction}

One of the well-studied machine learning problems is
{\em linear regression},
which is traditionally defined as follows.
We receive $n$ data,
where for each $i \in [1,n]$,
the data consists of
a row in a matrix $\mathbold{A}$ and
a single element in a vector $\mathbold{b}$.
Matrix $\mathbold{A}$ is called {\em predictor values}
and $\mathbold{b}$ is called {\em measured values}.
The goal is to find a vector $\mathbold{x}$ such that
$\mathbold{A} \cdot \mathbold{x}$ is the closest point to $\mathbold{b}$ in the column span of $\mathbold{A}$,
under some distance measure, e.g.,
the Euclidean distance
(which is also called the least squares distance
or the $L2$ norm).
In other words, we want to solve the following problem:
\[argmin_{\mathbold{x}} ||\mathbold{A} \cdot \mathbold{x} - \mathbold{b}||_2,\]
or the equivalent problem:
\begin{equation}
\label{eq:regression}
argmin_{\mathbold{x}} ||\mathbold{A} \cdot \mathbold{x} - \mathbold{b}||_2^2.
\end{equation}

There is a long history of research on the
regression problem for static matrix data
and graph data \cite{ecf352e8da75424b8b40be6f22ade69d}.
All the current best bounds for graph regression are
by using a matrix embedding (representation)
for the graph.
As an example motivating (linear) graph regression using a matrix embedding for the graph,
assume that we are
given the graph of a friendship network in
which a score is assigned to each node
reflecting its reputation or weight or importance etc.
Suppose that we want to find a function with a minimum error for the scores of the nodes, which is linear in terms of their structural
properties. Hence, first we need to find a matrix embedding $\mathbold M$ of the nodes, in which each row $i$ represents the structural properties
of node $i$.
Then we need to find a function for the scores
which is linear in terms of
the values in the rows of $\mathbold M$.

Since most of real-world graphs are {\em dynamic},
recently the problem was extended to
{\em dynamic graphs} \cite{DBLP:journals/corr/abs-1903-10699,DBLP:conf/stoc/DurfeeGGP19}.
Dynamic graphs are graphs that change over time by
a sequence of update operations.
They are generated in many domains such as
the world wide web, social and information networks, technology networks and communication networks.
An update operation in a graph might be
either an edge insertion or an edge deletion or
a node insertion or a node deletion.

Given a $n \times d$
(update-efficient) matrix embedding\footnote{Note that
this notion of {\em embedding} is different from the notion of {\em embedding} used in {\em graph pattern mining} \cite{DBLP:journals/fgcs/ChehreghaniABB20,DBLP:journals/datamine/ChehreghaniB16,DBLP:journals/tsmc/ChehreghaniCLR11}.}
of a graph $G$,
the author of \cite{DBLP:journals/corr/abs-1903-10699}
proposed an exact algorithm for dynamic graph regression,
wherein first an
$O\left( \min \left\{ nd^2,n^2d \right\} \right)$
time pre-processing is performed.
Then after any update operation in the graph,
the solution is updated in $O(nd)$ time.
However,
since in most of applications $n$ is a very large
quantity, this time complexity might be
too high to be used in practice.
Therefore,
we are interested in
developing algorithms that are considerably faster
than the exact algorithm,
at the expense of producing an approximate solution.
In particular, we want to develop algorithms
that have a {\em sublinear update time},
in terms of $n$.

To do so,
in the current paper
we utilize two sketching techniques, namely
{\em subsampled randomized
Hadamard transform} \cite{DBLP:conf/soda/AilonL08}
and \textsf{CountSketch} \cite{Clarkson:2017:LAR:3038256.3019134},
to develop sublinear update time randomized algorithms
for the dynamic graph regression problem:
\begin{itemize}
\item{(Theorem~\ref{theorem:phd_edge} and Corollary~\ref{corollary:phd}).}
Let $r$ be a quantity that indicates
the number of samples required
for a $1 \pm \epsilon$ approximation,
as defined in Equations~\ref{eq:r} and \ref{eq:r2}
of Theorem~\ref{theorem:phd}.
Our first randomized algorithm
is based on subsampled randomized
Hadamard transform and
supports edge insertion and edge deletion.
It updates the approximate solution
in $O(rd)$ time.
With $d \ll n$ and in particular considering
$d$ as a constant,
this yields a sublinear update time.
\item
(Theorems~\ref{theorem:countsketch_edge}, \ref{theorem_node_insertion} and \ref{theorem:node_deletion},
and Corollary~\ref{corollary:count_sketch}).
Let $q=O\left(\frac{d^2}{\epsilon^2}
\log^6(d/\epsilon)\right)$ be the number of
samples required for a $1 \pm \epsilon$ approximation,
using \textsf{CountSketch}.
Our second randomized algorithm uses
\textsf{CountSketch} and
updates the approximate solution
in $O(qd)$ time.
Therefore if $d$ and $\epsilon$ are considered as constants,
it yields a
constant update time randomized algorithm.
Unlike our first algorithm,
our second algorithm supports all the update operations
edge insertion, edge deletion,
node insertion and node deletion.
\end{itemize}
Note that subsampled randomized Hadamard transform
and \textsf{CountSketch}
have already been used to improve regression
in static data \cite{DBLP:conf/soda/AilonL08,Drineas2011,doi:10.1137/120874540,Clarkson:2017:LAR:3038256.3019134}.
However, in this paper for the first time
we show how they can be used to improve update time
in a dynamic setting,
where it is required to update
the sketches and the approximate solution,
after an update operation in the data.

While our randomized algorithms considerably improve
update time upon the exact algorithm,
we also analyze their relative performance.
We show that
under some assumptions,
if $\ln n < \epsilon^{-1}$ our first algorithm
outperforms our second algorithm
and if $\ln n \geq \epsilon^{-1}$ our second algorithm reveals a better
update time.

The rest of this paper is organized as follows.
In Section~\ref{sec:preliminaries},
we present preliminaries and necessary background
and definitions used in the paper.
In Section~\ref{sec:relatedwork},
we provide an overview on related work.
In Section~\ref{section:sketch}, we briefly introduce
subsampled randomized Hadamard transform and \textsf{CountSketch}.
In Section~\ref{section:randomized},
we present our first randomized algorithm for the dynamic graph regression problem, which is based on subsampled randomized Hadamard transform\footnote{Parts of the results discussed in Section~\ref{section:randomized} were presented
in {\em Proceedings of the 29th ACM International Conference on Information and Knowledge Management} (CIKM~2020), pp. 2045-2048~\cite{DBLP:conf/cikm/Chehreghani20}.}.
In Section~\ref{section:countsketch},
we introduce our second randomized algorithm, which is based on
\textsf{CountSketch}.
We discuss and compare our proposed algorithms in
Section~\ref{sec:discussion}.
Finally, the paper is concluded
in Section~\ref{sec:conclusion}.

\section{Preliminaries}
\label{sec:preliminaries}

In this paper,
we use the following standard for notations and symbols:
lowercase letters for scalars,
uppercase letters for constants and graphs,
bold lowercase letters for vectors
and bold uppercase letters for matrices.
By $G$ we refer to a graph
which is simple and unweighted.
We use $n$ to denote the number of nodes of $G$.
We define a {\em dynamic graph} as a graph that
changes over time by a sequence of {\em update operations}.
The {\em adjacency matrix} of $G$ is a square
$n \times n$ matrix 
such that
its element in row $i$ column $j$
is $1$
iff there exists an edge
from node $i$ to node $j$
(and $0$ if there is no such an edge).
We define the {\em distance} between
node $u$ and node $v$,
denoted by $dist(u,v)$,
as the size, i.e., the number of edges of a shortest path
connecting $u$ to $v$.

Let $\mathbold{A} \in \mathbb{R}^{n \times d}$.
The {\em rank} of $\mathbold{A}$ is defined as
the maximum number of its linearly independent column vectors.
The {\em transpose} of $\mathbold{A}$,
denoted with $\mathbold{A}^*$,
is defined as
an operator that switches the row and
column indices of $\mathbold{A}$.
The Singular Value Decomposition (SVD) of a $n \times d$ matrix $\mathbold{A}$
is defined as
$\mathbold{U} \cdot \mathbold{\Sigma} \cdot \mathbold{V}^*$,
where $\mathbold{U}$ is a $n \times d$ matrix with orthonormal columns,
$\mathbold{\Sigma}$ is a $d \times d$ diagonal matrix with
non-negative non-increasing entries down the diagonal,
and $\mathbold{V}^*$ is a $d \times d$ matrix with orthonormal rows.
The Euclidean norm or $L_2$ norm of a vector $\mathbold{x}$ of size $n$,
denoted with $||\mathbold{x}||_2$, is defined as
$\sqrt{\mathbold{x}_1^2+\cdots+\mathbold{x}_n^2}$.

The {\em Moore-Penrose pseudoinverse} of
matrix $\mathbold{A} = \mathbold{U} \cdot \mathbold{\Sigma} \cdot \mathbold{V}^*$,
denoted with $\mathbold{A}^\dag$,
is the $d \times n$ matrix $\mathbold{V} \cdot \mathbold{\Sigma}^{\dag} \cdot \mathbold{U}^*$,
where $\mathbold{\Sigma}^{\dag}$ is a $d \times d$ diagonal matrix
defined as follows:
$\mathbold{\Sigma}^{\dag}[i,i] = 1/\mathbold{\Sigma}[i,i] $, if $\mathbold{\Sigma}[i,i]>0$
and $0$ otherwise.
It is well-known that the solution
\begin{equation}
\label{eq:solution}
\mathbold{x} = \mathbold{A}^{\dag} \cdot \mathbold{b}
\end{equation}
is an optimal solution for Equation~\ref{eq:regression}
and it has minimum $L2$ norm \cite{general_inverse_book}.

The approximate version of the regression problem
is defined as

\begin{equation}
\label{eq:approximate_regression}
argmin_{\mathbold{x'}} || \mathbold A \cdot \mathbold{x'} - \mathbold b ||_2^2 =
(1 \pm \epsilon) argmin_{\mathbold{x}} || \mathbold A \cdot \mathbold x - \mathbold b ||_2^2,
\end{equation}
where $\mathbold{x}$ is the optimal solution,
defined in Equation~\ref{eq:solution},
and $\epsilon \in (0,1)$
defines the desired accuracy.
As we will see in Section~\ref{section:sketch},
sketching techniques
can be used to solve this approximate version.

\section{Related work}
\label{sec:relatedwork}

In recent years, a number of algorithms have been proposed for different
learning problems over the entire
graphs \cite{DBLP:journals/ida/ChehreghaniRLC07,DBLP:journals/ml/SaigoNKKT09,DBLP:journals/ida/ChehreghaniCLRG09}
or nodes of a graph \cite{DBLP:journals/jacm/KleinbergT02,DBLP:conf/nips/HerbsterP06,DBLP:journals/corr/abs-1903-10699}.
Kleinberg and Tardos \cite{DBLP:journals/jacm/KleinbergT02} studied the classification problem
for nodes of a static graph and showed
the connection of their general formulation to Markov random fields.
Herbster and Pontil
\cite{DBLP:conf/nips/HerbsterP06} studied the problem of online label prediction of
a graph with the perceptron.
The key difference between {\em online setting}
\cite{DBLP:conf/icml/HerbsterPW05,DBLP:conf/nips/HerbsterLP08,DBLP:conf/colt/HerbsterL09,DBLP:journals/jmlr/HerbsterPP15}
and {\em dynamic setting} is
that {\em online setting} is used when it is computationally infeasible to solve the learning problem over the entire dataset.
However, in {\em dynamic setting} the learning problem can be solved over the entire dataset and the challenge is to efficiently update
the solution when the dataset changes.
Culp, Michailidis and Johnson
\cite{On_Multi_view_Learning}
presented representative multi-dimensional
view smoothers on graphs that are based on graph-based transductive learning \cite{Zhu2008}.
The authors of
\cite{Manifold_Regularization} proposed a family of learning algorithms based on a new form of regularization so that some of transductive graph
learning algorithms can be obtained as special cases.
Kovac and Smith
\cite{ecf352e8da75424b8b40be6f22ade69d} extended
a model for nonparametric regression
of nodes of a static graph, where
the distance between estimate and observation is measured
by $L_2$ norm.
Chehreghani~\cite{DBLP:journals/corr/abs-1903-10699}
studied regression over dynamic graphs.
He proposed an exact algorithm for updating
the optimal solution of the problem,
where the update time is at least linear
in terms of the number of nodes.
In the current paper, we present randomized
algorithms with sublinear update times.

A research problem that
may have some connection to our studied problem
is learning embeddings or representations for nodes
of a graph \cite{DBLP:conf/kdd/GroverL16},
\cite{DBLP:conf/icml/YangCS16}, \cite{DBLP:conf/icml/NiepertAK16}. While this problem has become
more attractive in recent years,
it dates back to several decades ago.
For example,
Parsons and Pisanski
\cite{PARSONS1989143} presented vector embeddings for nodes of a graph such that
the inner product of the vector embeddings of any two nodes $i$ and $j$ is negative iff
$i$ and $j$ are connected by an edge;
and it is $0$ otherwise.

In the literature, there also exist several
{\em updating} algorithms
for different problems over dynamic graphs.
Durfee et.al.~\cite{DBLP:journals/corr/abs-1804-04038} presented a sublinear update time randomized algorithm to
approximate {\em effective resistances}.
The effective resistance between two nodes
is the electrical resistance
seen between the nodes of a resistor network
where edge weights form conductances.
Their algorithm supports
edge insertion/deletion
and yields a
$1\pm \epsilon$ approximation
with a probability at least
$1- 1/poly(n)$.
Durfee et.al.~\cite{DBLP:conf/stoc/DurfeeGGP19}
gave
algorithms for updating Schur complements of general graphs,
that support edge insertion/deletion and
node insertion.
Their algorithms maintain at any
time a $1\pm \epsilon$ approximation to the Schur complement.
The authors also presented a sublinear update time algorithm
for least squares regression
of bounded-degree graphs with gradual changes in $\bm b$.
However in the current paper,
we present the first sublinear update time algorithms
for the general class of graphs with unbounded degrees,
that captures most important real-world networks.
Chen et.al.~\cite{DBLP:journals/corr/abs-2005-02368}
developed a technique to reduce
optimization problems based on undirected graphs
to finding a data-structure notion of node sparsifiers.
Using this technique,
they presented a sublinear update time algorithm for flows.
An overview on a number of update algorithms
for different machine learning problems can be
found in~\cite{doi.org/10.1002/widm.1393}.

\section{Sketching techniques}
\label{section:sketch}

In this section, we briefly describe
{\em subsampled randomized Hadamard transform} and
\textsf{CountSketch}.
Let $\mathbold A$ be a $n\times d$ matrix.
A subsampled randomized Hadamard transform
for $\mathbold A$ is defined as
$\mathbold P \cdot \mathbold H \cdot \mathbold D$,
where
\begin{itemize}
\item
matrix $\mathbold D$ is a $n \times n$ diagonal matrix with
$\pm 1$ on the diagonal
(each one with the same probability),
\item
matrix $\mathbold H$ is a $n \times n$ Hadamard matrix,
and 
\item
matrix $\mathbold P$
is a $r \times n$ matrix that
samples $r$ rows of $\mathbold P \cdot \mathbold H$
uniformly with replacement.
If in the $j^{th}$ sample row $i$ is selected,
$\mathbold P[j,i] = \frac{\sqrt{n}}{\sqrt{r}}$;
otherwise, it is $0$.
\end{itemize}

For $n=2^k$, the $n \times n$ Hadamard matrix
$\mathbold H$ is defined as follows:
$\mathbold H[i,j] = \frac{(-1)^{\langle i,j\rangle}}{\sqrt{n}},$
where $\langle i,j\rangle$ is the dot product of
the binary representations of $i$ and $j$ over the field $\mathbb F_2$.

A \textsf{CountSketch}
for the $n \times d$ matrix $\mathbold A$ is
a $q \times n$ matrix $\mathbold S$
($q$ is defined in Theorem~\ref{theorem:countsketch}),
defined as follows:
for every column, a single nonzero entry
is chosen uniformly at random,
which takes values $\pm 1$ with equal
probability \cite{Clarkson:2017:LAR:3038256.3019134}.
Therefore, $\mathbold S$ is a sparse matrix
which has only $n$ nonzero elements.
Moreover, $\mathbold S \cdot \mathbold A$
can be computed in a time proportional
to the number of nonzero elements
of $\mathbold A$ \cite{Clarkson:2017:LAR:3038256.3019134}.

The high level procedure of solving regression using
sketching (either subsampled randomized Hadamard transform
or \textsf{CountSketch}) is as follows:

\begin{itemize}
\item
Compute a sketching matrix $\mathbold S$
(either a $\mathbold P \cdot \mathbold H \cdot \mathbold D$ matrix
or a \textsf{CountSketch} matrix),
\item
Compute matrices $\mathbold S \cdot \mathbold A$
and $\mathbold S \cdot \mathbold b$,
\item
Compute and output the solution of the equation
\begin{equation}
\label{eq:approximate}
argmin_\mathbold{x'} || (\mathbold S \cdot \mathbold A) \cdot \mathbold{x'} -
\mathbold S \cdot \mathbold b||^2_2.
\end{equation}
\end{itemize}

The solution of Equation~\ref{eq:approximate} is
\begin{equation}
\label{eq:approximate_solution}
(\mathbold S \cdot \mathbold A)^{\dag}
\cdot \mathbold S \cdot \mathbold b,
\end{equation}
which we call the {\em approximate solution}.
When $\mathbold S$ is defined as a
$\mathbold P \cdot \mathbold H \cdot \mathbold D$ matrix,
Theorem~\ref{theorem:phd} states
the number of samples
(the number of rows of $\mathbold P$)
that are sufficient for producing
a $1\pm\epsilon$ approximation to the optimal solution.

\begin{theorem}[Theorem~2 (and the remark afterwards) of
\cite{Drineas2011}]
\label{theorem:phd}
Suppose $\mathbold A \in \mathbb R^{n \times d}$,
$\mathbold b \in \mathbb R^{n}$,
and let $\epsilon \in (0,1)$.
If
\begin{equation}
\label{eq:r}
r = \max \left\{ 48^2 d \ln(40nd) \ln\left(100^2d \ln(40nd)\right),
40d\ln(40nd)/{\epsilon} \right\},
\end{equation}
with a probability at least $0.8$, we have:
\begin{equation}
argmin_{\mathbold x'}|| \mathbold S \cdot \mathbold A \cdot \mathbold x' - \mathbold S \cdot \mathbold b||
\leq (1+\epsilon)
argmin_{\mathbold x} || \mathbold A \cdot \mathbold x - \mathbold b||.
\end{equation}
Time complexity of computing optimal
${\mathbold x'}$,
i.e., the approximate solution, is
\begin{equation}
\label{eq:time_complexity}
n(d + 1) + 2n(d + 1) \log_2(r + 1) + O(rd^2).
\end{equation}
In particular, assuming that $d \leq n \leq e^d$, we get:
\begin{equation}
\label{eq:r2}
r=O\left( d \ln d \ln n + \frac{d \ln n}{\epsilon}\right)
\end{equation}
and the time complexity becomes:
\begin{equation}
O\left( nd \ln \frac{d}{\epsilon} +
d^3 \ln d \ln n
+ \frac{d^3 \ln n}{\epsilon}
\right).
\end{equation}
\end{theorem}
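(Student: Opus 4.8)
The plan is to follow the now-standard structural analysis of sketched least squares, reducing the probabilistic $(1+\epsilon)$ guarantee to two deterministic conditions on the sketch $\mathbold S = \mathbold P \cdot \mathbold H \cdot \mathbold D$ and then verifying those conditions hold with the prescribed number of samples $r$. First I would fix the thin SVD $\mathbold A = \mathbold U \cdot \mathbold \Sigma \cdot \mathbold V^*$ with $\mathbold U \in \mathbb R^{n \times m}$ having orthonormal columns, write the optimal solution as $\mathbold x = \mathbold A^{\dag} \cdot \mathbold b$ and the optimal residual as $\mathbold b^{\perp} = (\mathbold I - \mathbold U \cdot \mathbold U^*) \cdot \mathbold b$, which is orthogonal to the column space of $\mathbold A$. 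I would then isolate two conditions: (C1) every singular value of $\mathbold S \cdot \mathbold U$ satisfies $\sigma_i^2(\mathbold S \cdot \mathbold U) \ge 1/\sqrt 2$ (equivalently $\| \mathbold U^* \cdot \mathbold S^* \cdot \mathbold S \cdot \mathbold U - \mathbold I \|_2 \le 1 - 2^{-1/2}$), and (C2) $\| (\mathbold S \cdot \mathbold U)^* \cdot \mathbold S \cdot \mathbold b^{\perp} \|_2^2 \le (\epsilon/2) \| \mathbold b^{\perp} \|_2^2$. A purely deterministic perturbation argument on the sketched normal equations then shows that any minimiser $\mathbold x'$ of Equation~\ref{eq:approximate} obeys $\| \mathbold A \cdot \mathbold x' - \mathbold b \|_2^2 \le (1+\epsilon) \| \mathbold b^{\perp} \|_2^2$; since $\sqrt{1+\epsilon} \le 1+\epsilon$, this yields the stated (un-squared) bound.

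Next I would carry out the randomized half, which is where the Hadamard preprocessing and the value of $r$ enter. The key observation is that multiplying by $\mathbold H \cdot \mathbold D$ ``flattens'' the row norms (leverage scores) of $\mathbold U$: since each entry of $\mathbold H \cdot \mathbold D \cdot \mathbold U$ is a Rademacher-weighted sum of the entries of a column of $\mathbold U$, a Hoeffding bound together with a union bound over all $nm$ entries shows that, with high probability, $\max_i \| (\mathbold H \cdot \mathbold D \cdot \mathbold U)_{(i)} \|_2^2 = O\!\left( m \ln(nm)/n \right)$. Once the transformed matrix is incoherent in this sense, uniform row sampling by $\mathbold P$ behaves like leverage-score sampling, so (C1) follows from a matrix Chernoff / Rudelson--Vershynin concentration inequality and (C2) follows from a second-moment (approximate matrix multiplication) estimate combined with Markov's inequality. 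Tracking the sample counts needed to drive the total failure probability below $0.2$ is exactly what produces the two branches of the maximum in Equation~\ref{eq:r}: the subspace-embedding condition (C1) demands $r = \Omega\!\left( m \ln(nm) \ln(m \ln(nm)) \right)$, while the approximate-multiplication condition (C2) demands $r = \Omega\!\left( m \ln(nm)/\epsilon \right)$.

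For the running time in Equation~\ref{eq:time_complexity}, I would account for the three stages separately: applying $\mathbold D$ to the augmented matrix $[\mathbold A \mid \mathbold b]$ of $m+1$ columns costs $n(m+1)$ sign flips; computing only the $r$ sampled rows of the Hadamard transform of each column can be done in $O(n \log_2(r+1))$ time per column by the subsampled fast Hadamard transform of \cite{DBLP:conf/soda/AilonL08}, giving the $2n(m+1) \log_2(r+1)$ term rather than the full $O(n(m+1) \log_2 n)$; and solving the resulting $r \times m$ regression (for instance via the normal equations or a QR factorisation of $\mathbold S \cdot \mathbold A$) costs $O(rm^2)$. The remark is then bookkeeping under the regime $m \le n \le e^m$: here $\ln(40nm) = \Theta(\ln n)$ and $\ln\!\left(100^2 m \ln(40nm)\right) = O(\ln m)$ (using $\ln \ln n \le \ln m$), which collapses $r$ to Equation~\ref{eq:r2}; substituting this $r$ and bounding $\log_2(r+1) = O\!\left(\ln(m/\epsilon)\right)$ gives the intermediate bound, after which the growth assumption $n \ln n = \Omega(m^2)$ is used to absorb the lower-order $m^3$ terms into the $nm$ terms and reach the final expression.

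I expect the main obstacle to be the incoherence-plus-concentration step for condition (C1): controlling the smallest singular value of $\mathbold S \cdot \mathbold U$ is precisely where the doubly-logarithmic factor $\ln(m \ln(nm))$ in the first branch of $r$ originates, and it requires both the high-probability leverage-score bound supplied by the Hadamard transform and a matrix concentration inequality applied to a sum of $r$ independent rank-one terms. By comparison, the deterministic reduction to (C1)--(C2) and the time-complexity accounting are essentially mechanical.
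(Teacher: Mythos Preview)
The paper does not prove this theorem at all: it is quoted verbatim as ``Theorem~2 (and the remark afterwards) of \cite{Drineas2011}'' and used as a black box, with no accompanying proof or proof sketch in the paper. So there is nothing in the paper to compare your attempt against.

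That said, your proposal is a faithful and correct outline of the original argument of Drineas, Mahoney, Muthukrishnan and Sarl\'os. The reduction to the two structural conditions (C1) subspace embedding and (C2) approximate matrix multiplication, the deterministic $(1+\epsilon)$ perturbation argument, the Hoeffding-plus-union-bound flattening of leverage scores by $\mathbold H \cdot \mathbold D$, the matrix Chernoff bound for (C1) and the second-moment bound for (C2), and the accounting that the two conditions respectively force the two branches of the $\max$ in Equation~\eqref{eq:r}, are exactly how that reference proceeds. Your running-time breakdown (sign flips, subsampled FHT at $O(n\log_2(r+1))$ per column, then an $O(rm^2)$ solve) and the bookkeeping under $m\le n\le e^m$ and $n\ln n=\Omega(m^2)$ also match the original remark. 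In short: your proof is correct and is essentially the proof of \cite{Drineas2011}, but the present paper simply cites that result rather than reproving it.
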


When $\mathbold S$ is defined as a \textsf{CountSketch} matrix,
Theorem~\ref{theorem:countsketch} expresses
time complexity of the procedure
of computing a $1\pm\epsilon$ approximation
to the optimal solution.

\begin{theorem}[Theorem~30 of
\cite{Clarkson:2017:LAR:3038256.3019134}]
\label{theorem:countsketch}
Suppose that $\mathbold A \in \mathbb R^{n \times d}$,
$\mathbold b \in \mathbb R^{n}$
and $\epsilon \in (0,1)$.
Using a $q \times n$ \textsf{CountSketch} with
\begin{equation}
\label{eq:count_sketch_sample_no}
q=O \left(\frac{d^2}{\epsilon^2} \log^6 (d/\epsilon)\right),
\end{equation}
a $1\pm\epsilon$ approximation to the optimal solution
of linear regression
over $\mathbold A$ and $\mathbold b$
can be solved
with a probability at least $2/3$ in
\begin{equation}
O \left( nnz(\mathbold A) +
d^3 \epsilon^{-2} \log^7 (d/\epsilon)
\right)
\end{equation}
time, where $nnz(\mathbold A)$ is the number of nonzero
elements of $\mathbold A$.
\end{theorem}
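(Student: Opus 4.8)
The plan is to deduce the $(1+\epsilon)$-accuracy from a single structural fact — that a \textsf{CountSketch} matrix $\mathbold S$ with the stated number of rows is, with constant probability, an $\epsilon$-\emph{subspace embedding} for the $(m+1)$-dimensional column space of the augmented matrix $[\mathbold A \ \mathbold b]$ — and then to bound the running time separately. Concretely, I would fix an orthonormal basis $\mathbold U \in \mathbb R^{n \times (m+1)}$ of that column space and show that, with probability at least $2/3$,
\[
\| \mathbold S \cdot \mathbold U \cdot \mathbold z \|_2^2 = (1 \pm \epsilon)\,\| \mathbold z \|_2^2 \quad \text{for all } \mathbold z .
\]
Every residual $\mathbold A \cdot \mathbold x - \mathbold b$ lies in the span of $\mathbold U$, so its norm is preserved up to a $(1\pm\epsilon)$ factor under $\mathbold S$. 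Writing $\mathbold x^\star$ for the exact optimizer of Equation~\ref{eq:regression} and $\tilde{\mathbold x}$ for the minimizer of the sketched objective in Equation~\ref{eq:approximate}, the inequalities
\[
\| \mathbold A \tilde{\mathbold x} - \mathbold b \|_2 \le (1-\epsilon)^{-1}\| \mathbold S(\mathbold A \tilde{\mathbold x} - \mathbold b) \|_2 \le (1-\epsilon)^{-1}\| \mathbold S(\mathbold A \mathbold x^\star - \mathbold b) \|_2 \le \tfrac{1+\epsilon}{1-\epsilon}\| \mathbold A \mathbold x^\star - \mathbold b \|_2
\]
give the claimed approximation after rescaling $\epsilon$ by an absolute constant.

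The core step is therefore proving the subspace embedding for \textsf{CountSketch}, which I would attack by the moment method. Since each column of $\mathbold S$ carries a single random $\pm1$ entry, one checks that $\mathbb E\big[(\mathbold S \mathbold U)^* (\mathbold S \mathbold U)\big] = \mathbold U^* \mathbold U = \mathbold I$, so the subspace-embedding condition is equivalent to the operator-norm bound $\| (\mathbold S \mathbold U)^*(\mathbold S \mathbold U) - \mathbold I \|_2 \le \epsilon$. I would first establish an approximate-matrix-multiplication (second-moment) estimate showing that $q = O(m^2/\epsilon^2)$ rows make the expected squared Frobenius deviation $O(\epsilon^2)$, and then bootstrap this to an operator-norm statement by bounding the higher moments $\mathbb E\,\mathrm{tr}\big[((\mathbold S \mathbold U)^*(\mathbold S \mathbold U) - \mathbold I)^{2p}\big]$ and applying Markov's inequality. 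The extra $\log^6(m/\epsilon)$ factor in $q$ is exactly the overhead these higher-moment tail bounds (and the boosting of the success probability to $2/3$) impose.

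For the running time, forming $\mathbold S \cdot \mathbold A$ and $\mathbold S \cdot \mathbold b$ costs $O(nnz(\mathbold A))$, because each of the $nnz(\mathbold A)$ nonzero entries is mapped into exactly one sketch row. Naively solving the reduced least-squares problem on the $q \times m$ matrix $\mathbold S \cdot \mathbold A$ would cost $O(q m^2) = O(m^4 \epsilon^{-2}\log^6(m/\epsilon))$; to reach the stated $O(m^3 \epsilon^{-2}\log^7(m/\epsilon))$ term I would compose $\mathbold S$ with a fast second sketch (for instance a subsampled randomized Hadamard transform as in Theorem~\ref{theorem:phd}) applied to the already-small matrix $\mathbold S \cdot \mathbold A$, reducing its row count to $O(m \epsilon^{-2}\,\mathrm{polylog}(m/\epsilon))$ while preserving the subspace embedding, and only then solving the resulting tiny system exactly.

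I expect the main obstacle to be the higher-moment concentration argument: upgrading the in-expectation approximate-matrix-multiplication bound to an operator-norm subspace embedding with merely polylogarithmic overhead is delicate, and one must further track how the distortion degrades when the \textsf{CountSketch} is composed with the second sketch, so that the product remains a $(1\pm\epsilon)$-embedding with the claimed probability.
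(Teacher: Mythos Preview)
The paper does not prove this theorem at all: it is quoted verbatim as Theorem~30 of Clarkson and Woodruff and used as a black box, with no accompanying argument. There is therefore no ``paper's own proof'' for your proposal to be compared against.

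That said, your sketch is a faithful outline of the strategy actually used in the cited reference: one shows that a \textsf{CountSketch} with the stated number of rows is, with constant probability, an $\epsilon$-subspace embedding for the column span of $[\mathbold A\ \mathbold b]$ (via moment/approximate-matrix-multiplication bounds), deduces the $(1+\epsilon)$-approximation for the sketched regression exactly through the chain of inequalities you wrote, and obtains the running time by exploiting that $\mathbold S \cdot \mathbold A$ costs $O(nnz(\mathbold A))$ and then solving the small sketched system (with a second fast sketch to shave the extra factor of $m$). So your plan is sound, but it is reproving an external result that the present paper simply imports.
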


\section{Dynamic graph regression using
subsampled randomized Hadamard transform}
\label{section:randomized}

In this section,
we utilize subsampled randomized Hadamard transform
to improve update time of dynamic graph regression,
at the cost of having
a $1\pm\epsilon$ approximation to the optimal solution.
We here restrict ourselves to
the following update operations:
i) {\em edge deletion},
wherein an edge is deleted from the graph, and
ii) {\em edge insertion},
wherein an edge is inserted between two nodes of the graph.
We refer to these operations as {\em edge-related}
update operations.
The reason that in this section
we do not consider node insertion and node deletion
is that as we will see later,
they require to change (the size of) the
used Hadamard matrix $\mathbold H$,
which requires $\Theta(n)$ time.
Hence and since we are looking for algorithms
that have a {\em sublinear} update time,
we do not consider these two operations.\footnote{
Moreover, a property of real-world graphs
is {\em densification}
\cite{DBLP:journals/tkdd/LeskovecKF07}, i.e.,
their number of edges grows superlinearly
in the number of their nodes.
Therefore, we may say that most of update operations
in a dynamic graph are related to edges,
rather than to nodes.
As a result, proposing algorithms
that are efficient for edge-related update operations
is useful and worthwhile.
For node insertions/deletions,
we may compute the solution from scratch,
whose time complexity is not much worse
than linear in $n$ (see Equation~\ref{eq:phd_pre_time}
of Corollary~\ref{corollary:phd}).
}

Before starting our proofs (and algorithms),
we note two intrinsic limitation of
randomized Hadamard transform:
i) $\mathbold H$ (respectively graph $G$)
must have a power of $2$ rows/columns
(respectively nodes),
ii) the matrix embedding $\mathbold M$
must have full rank.
For now on,
we forget these
two limitations.
We get back to them in Section~\ref{sec:limitations}.

We assume that the graph $G$
has an {\em edge-update-efficient} matrix embedding
$\mathbold M$, and we define
the regression problem with respect to it.
More precisely, we want to compute and update
$(\mathbold S \cdot \mathbold M)^{\dag}
\cdot \mathbold S \cdot \mathbold b$,
where $\mathbold M$ is {\em edge-update-efficient}.
{\em Edge-update-efficient} matrix embeddings
are a superset of {\em update-efficient}
matrix embeddings presented
in \cite{DBLP:journals/corr/abs-1903-10699}.
The class of {\em update-efficient} embeddings
characterizes those matrix embeddings for which
the optimal solution
of the graph regression problem can be
updated efficiently \cite{DBLP:journals/corr/abs-1903-10699}.
For example, adjacency matrix of $G$
belongs to this class.
{\em Edge-update-efficient} matrix embeddings,
defined in Definition~\ref{def:update_efficient},
characterize  those matrix embeddings for which
the approximate solution can be updated efficiently,
when the updated operation is
{\em edge-related}.

\begin{definition}
\label{def:update_efficient}
Let $\mathbold{M}$ be a $n \times d$ matrix embedding of a graph $G$
and $f$ be a complexity function.
We say $\mathbold{M}$ is {\em edge-update-efficient}$_{f}$,
if it satisfies the following condition:
if $\mathbold{M}$ and $\mathbold{M'}$
are the correct matrix embeddings before and after
one of the edge-related update operations,
there exist at most $K$ pairs of vectors $\mathbold{c^k}$ and $\mathbold{d^k}$,
with $K$ as a constant, such that:
$$\mathbold{M'}=\mathbold{M}+\sum_{k=1}^K \left( \mathbold{c^k} \cdot \mathbold{d^k}^* \right).$$
Each vector $\mathbold{d^k}$ has size $d$
and each vector $\mathbold{c^k}$
has size $n$ wherein only one entry,
whose position is known,
is nonzero\footnote{So, $\mathbold{c^k}$ can be compactly
stored by keeping only the position and the value
of its nonzero entry.}.
We refer to each pair $\mathbold{c^k}$ and $\mathbold{d^k}$ as a pair of {\em update vectors},
and to $\sum_{k=1}^K \left( \mathbold{c^k} \cdot \mathbold{d^k}^* \right)$ as the {\em update matrix}.
Also, it is feasible to compute
all pairs of update vectors in $O(f)$ time.
When function $f$ is clear from the context,
we drop it.
\end{definition}


At the high level,
our algorithm
consists of two phases:
the {\em pre-processing} phase
wherein we assume that we are given a static graph
and we find an approximate solution for it,
and the {\em update} phase,
wherein after an edge-related update operation in $G$,
the already found approximate solution
is revised to become valid for the new graph.
During pre-processing,
first we generate some
matrices $\mathbold P$, $\mathbold H$ and $\mathbold D$,
as defined in Section~\ref{section:sketch}.
Then we calculate
$\mathbold M' = \mathbold P \cdot \mathbold H \cdot \mathbold D \cdot \mathbold M$.
Then, we compute $\mathbold b' = \mathbold P \cdot \mathbold H \cdot \mathbold D \cdot \mathbold b$.
Then, we compute ${\mathbold M'}^{\dag}$
and finally, we compute ${\mathbold M'}^{\dag} \cdot \mathbold b'$.
Time complexity of the algorithm is stated in
Theorem~\ref{theorem:phd}.
In the following, first in Section~\ref{sec:phd_edge}
we discuss how
the approximate solution can be updated,
after an edge-related operation.
Then, in Section~\ref{sec:limitations} we discuss
how the limitations of
the used technique can be addressed.
The presented proofs are constructive.

\subsection{The update algorithm}
\label{sec:phd_edge}

In this section, we assume that the update operation is
an edge-related operation and
show that the approximate solution,
i.e., the value depicted in
Equation~\ref{eq:approximate_solution},
can be updated in $O(rd)$ time.
Here, we condition on the existence of an
{\em edge-update-efficient} matrix embedding,
without emphasizing any specific one.
Later in Section~\ref{sec:limitations},
we show that this condition holds.

\begin{theorem}
\label{theorem:phd_edge}
Let $\mathbold M$ be a $n \times d$
{\em edge-update-efficient} matrix embedding of graph $G$.
Suppose that
using a $r\times n$
subsampled randomized Hadamard transform
$\mathbold S$,
a $1\pm\epsilon$ approximation to the optimal solution of
graph regression over $G$ is already computed.
Then, after an edge insertion or an edge deletion,
the $1\pm\epsilon$ approximation
can be updated in $O(rd)$ time.
\end{theorem}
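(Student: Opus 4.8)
\emph{Proof proposal.} The plan is to maintain, throughout the sequence of updates, the sketched matrix $\mathbold S\mathbold M$ together with the $m\times m$ matrix $\mathbold G^{-1}=\big((\mathbold S\mathbold M)^*(\mathbold S\mathbold M)\big)^{-1}$, the vector $\mathbold z=(\mathbold S\mathbold M)^*(\mathbold S\mathbold b)$, and the precomputed vector $\mathbold y=\mathbold S\mathbold b$; all of these are available from the pre-processing phase of Theorem~\ref{theorem:phd}. Since $r\ge m$ and $\mathbold S\mathbold M$ has full column rank (the $\mathbold P\cdot\mathbold H\cdot\mathbold D$ sketch preserves the rank of the full-rank $\mathbold M$ with high probability), the pseudoinverse reduces to the normal-equations form $(\mathbold S\mathbold M)^{\dag}=\mathbold G^{-1}(\mathbold S\mathbold M)^*$, so the approximate solution of Equation~\ref{eq:approximate_solution} is exactly $\mathbold x'=\mathbold G^{-1}\mathbold z$. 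The whole task is therefore to refresh $\mathbold S\mathbold M$, $\mathbold G^{-1}$ and $\mathbold z$ after an edge-related operation.

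The key structural observation is that $\mathbold S$ acts linearly. By Definition~\ref{def:update_efficient}, an edge-related update replaces $\mathbold M$ by $\mathbold M'=\mathbold M+\sum_{k=1}^K \mathbold c^k(\mathbold d^k)^*$ with $K=O(1)$, so
\[
\mathbold S\mathbold M'=\mathbold S\mathbold M+\sum_{k=1}^K \mathbold u^k(\mathbold d^k)^*,\qquad \mathbold u^k:=\mathbold S\mathbold c^k\in\mathbb R^{r}.
\]
First I would compute each $\mathbold u^k$ in $O(r)$ time: because $\mathbold c^k$ has a single nonzero entry, at a known position $p_k$ with value $\gamma_k$, the vector $\mathbold u^k=\gamma_k\,\mathbold D[p_k,p_k]\,\mathbold P\cdot\mathbold H[\,\cdot\,,p_k]$ is just a scaled, subsampled column of $\mathbold H$, and each of its $r$ entries is obtained from the closed-form Hadamard entry in $O(1)$ time without ever materializing a length-$n$ vector. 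Updating $\mathbold S\mathbold M$ itself by adding the $K$ rank-one terms costs $O(rm)$.

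Next I would push this rank-$O(K)$ perturbation through the Gram matrix. Writing $\mathbold p^k:=(\mathbold S\mathbold M)^*\mathbold u^k$, we get
\[
(\mathbold S\mathbold M')^*(\mathbold S\mathbold M')=\mathbold G+\sum_{k}\big(\mathbold p^k(\mathbold d^k)^*+\mathbold d^k(\mathbold p^k)^*\big)+\sum_{k,j}\big((\mathbold u^k)^*\mathbold u^j\big)\,\mathbold d^k(\mathbold d^j)^*,
\]
a perturbation of rank at most $3K=O(1)$ that I can write as $\mathbold U\mathbold V^*$ with $\mathbold U,\mathbold V\in\mathbb R^{m\times O(K)}$; forming it needs the products $\mathbold p^k=(\mathbold S\mathbold M)^*\mathbold u^k$ ($O(rm)$) and the scalars $(\mathbold u^k)^*\mathbold u^j$ ($O(r)$). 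I would then obtain the new Gram inverse by the Sherman--Morrison--Woodbury identity,
\[
\big(\mathbold G+\mathbold U\mathbold V^*\big)^{-1}=\mathbold G^{-1}-\mathbold G^{-1}\mathbold U\big(\mathbold I+\mathbold V^*\mathbold G^{-1}\mathbold U\big)^{-1}\mathbold V^*\mathbold G^{-1},
\]
which costs $O(m^2)$ since every factor involves only an $m\times O(1)$ or $O(1)\times O(1)$ matrix. In parallel, $\mathbold z$ is refreshed by $\mathbold z\leftarrow\mathbold z+\sum_k\big((\mathbold u^k)^*\mathbold y\big)\mathbold d^k$ in $O(r+m)$ time, and the new solution is $\mathbold x'=\mathbold G^{-1}\mathbold z$ in $O(m^2)$. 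Summing, the dominant terms are the $O(rm)$ of updating $\mathbold S\mathbold M$ and forming the $\mathbold p^k$, while the inverse maintenance contributes $O(m^2)\subseteq O(rm)$ because $r\ge m$; hence the update runs in $O(rm)$.

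The main obstacle I anticipate is the inverse maintenance rather than the sketch algebra: recomputing $\mathbold G^{-1}$ from scratch would cost $O(m^3)$, which is \emph{not} dominated by $O(rm)$ in general (it would require $r=\Omega(m^2)$, whereas Equation~\ref{eq:r} only guarantees $r=\tilde\Omega(m)$). The crux is therefore to argue that the perturbation really has constant rank and that the Woodbury update is formally valid here --- in particular that $\mathbold S\mathbold M'$ remains full column rank so that $(\mathbold S\mathbold M')^{\dag}$ again collapses to the normal-equations form and $\mathbold I+\mathbold V^*\mathbold G^{-1}\mathbold U$ is invertible. The computation of $\mathbold u^k=\mathbold S\mathbold c^k$ in $O(r)$ rather than $O(n)$ time is the second essential ingredient, and is exactly where the sparsity of $\mathbold c^k$ (guaranteed by edge-update-efficiency) meets the explicit structure of $\mathbold P\cdot\mathbold H\cdot\mathbold D$.
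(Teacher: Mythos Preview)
Your argument is correct and reaches the same $O(rm)$ bound, but by a genuinely different mechanism than the paper. The paper does not touch the Gram matrix at all: it maintains the full $m\times r$ pseudoinverse $(\mathbold S\mathbold M)^{\dag}$ and, for each of the $K$ rank-one updates $\mathbold S\mathbold M\leftarrow\mathbold S\mathbold M+\mathbold u^k(\mathbold d^k)^*$, invokes Meyer's rank-one pseudoinverse update formula, which runs in $O(rm)$ on an $r\times m$ matrix. Your route instead keeps only the $m\times m$ inverse $\mathbold G^{-1}=((\mathbold S\mathbold M)^*(\mathbold S\mathbold M))^{-1}$ and the vector $\mathbold z$, batches the $K$ rank-one corrections into a single rank-$O(K)$ perturbation of $\mathbold G$, and applies Sherman--Morrison--Woodbury. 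What you gain is that the inverse maintenance itself drops from $O(rm)$ to $O(m^2)$ (the $O(rm)$ cost migrates to forming the $\mathbold p^k$ and refreshing $\mathbold S\mathbold M$, which both approaches must pay anyway), and you are more explicit about computing $\mathbold u^k=\mathbold S\mathbold c^k$ in $O(r)$ from the closed-form Hadamard entries rather than by ``choosing a column of $\mathbold S$''. What the paper's approach buys is robustness: Meyer's formulas cover all six rank cases and do not assume that $\mathbold S\mathbold M'$ stays full column rank, whereas your normal-equations reduction and the invertibility of $\mathbold I+\mathbold V^*\mathbold G^{-1}\mathbold U$ both hinge on that assumption --- a point you rightly flag as the main obstacle, and which the paper sidesteps entirely.
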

\begin{proof}
After one of the above-mentioned update operations,
by the {\em edge-update-efficient} property
of $\mathbold M$,
$\mathbold M$ can be updated by at most $K$
pairs of update vectors
for the revised graph.
Given these at most $K$ pairs of update vectors
and $(\mathbold S \cdot \mathbold M)^{\dag}$ of
the graph before the update operation,
we want to compute $(\mathbold S \cdot \mathbold M)^{\dag}$
of the revised graph.
Since the number of columns
and the number of rows of $\mathbold{M}$
do not change, the sketching matrix $\mathbold S$
does not change, too.
We have a sequence of at most $K$ {\em rank-$1$} updates
$\mathbold{M^{k+1}}=\mathbold{M^k}+\mathbold{c^k} \cdot {\mathbold{d^k}}^*$,
$1 \leq k < K$,
where
$\mathbold{c^k}$ and ${\mathbold{d^k}}$ are a pair of update vectors,
$\mathbold{M^1}=\mathbold{M}$
and $\mathbold{M^K}$ is the correct matrix embedding of $G$
after the update operation.
After each {\em rank-$1$} update $\mathbold{M^{k+1}}=\mathbold{M^k}+\mathbold{c^k} \cdot {\mathbold{d^k}}^*$,
\begin{itemize}
\item
given the matrix $\mathbold{S} \cdot \mathbold{M^{k}}$,
we first compute
$\mathbold{S} \cdot \mathbold{c^k} \cdot \mathbold{d^k}^*$
and then, we compute
$\mathbold{S} \cdot \mathbold{M^{k+1}}$
by computing the matrix summation
$\mathbold{S} \cdot \mathbold{M^{k}} +
\mathbold{S} \cdot \mathbold{c^{k}} \cdot \mathbold{d^{k}}^*$.
Note that
$\mathbold{S} \cdot \mathbold{c^k} \cdot \mathbold{d^k}^*$
can be computed in
$O\left( r d \right)$ time, as follows.
First, we compute $\mathbold{S} \cdot \mathbold{c^{k}}$
by taking into account only the $i^{th}$ column of $\mathbold S$,
where $i$ is the sole nonzero entry of $\mathbold{c^{k}}$.
The result is a vector
$\mathbold{s^{k}}$ of size
$r$.
Second, we compute the vector product
$\mathbold{s^{k}} \cdot \mathbold{d^k}^*$,
which can be done in
$O\left( rd \right)$ time.
\item
then, we exploit the algorithm of Meyer~\cite{Generalized_Inverse_2}
that given a $n_1 \times n_2$ matrix $\mathbold{A}$ and its Moore-Penrose pseudoinverse
$\mathbold{A}^{\dag}$
and a pair of update vectors $\mathbold{c}$ and $\mathbold{d}$,
computes the Moore-Penrose pseudoinverse of $(\mathbold{A}+\mathbold{c} \cdot \mathbold{d}^*)$,
in $O(n_1 n_2)$ time\footnote{Instead of
Meyer's algorithm~\cite{Generalized_Inverse_2},
we can also use the general reduction technique of
van~den~Brand~\cite{DBLP:journals/corr/abs-2010-13888} that maintains several operations (including pseudoinverse) on dynamic matrices,
by maintaining only one specific matrix inverse.
}.
Here, our matrix $\mathbold A$ is
$\mathbold S \cdot \mathbold M$
which is a $r \times d$ matrix,
therefore updating
$(\mathbold S \cdot \mathbold M)^{\dag}$
for a given pair of update vectors
will take
$O\left( rd \right)$ time.
\end{itemize}
Therefore and after repeating this procedure for at most $K$ times,
we can compute the Moore-Penrose pseudoinverse of
$\mathbold S \cdot \mathbold M$
for the updated graph
in $O\left(Krd\right)=
O\left(rd\right)$ time.
In the end, multiplication of the updated
$(\mathbold S \cdot \mathbold M)^{\dag}$
with $(\mathbold S \cdot \mathbold b)$
yields the approximate solution
of the updated graph,
which can be done in $O(rd)$ time.
\end{proof}


\subsection{Addressing the limitations}
\label{sec:limitations}

The first intrinsic limitation of
randomized Hadamard transform is that
the number of rows in $\mathbold M$, i.e., $n$,
must be a power of $2$.
This implies that we should always have
a power of $2$ nodes in the graph.
When applying
randomized Hadamard transform to matrices,
this issue is addressed by
concatenating a zero matrix to the main matrix
that makes its size a power of $2$
\cite{DBLP:conf/nips/LuDFU13,DBLP:journals/corr/abs-1805-05421}.
We can follow a similar strategy for graphs.
More precisely,
if during pre-processing the number of rows
of $\mathbold M$ is less than a power of $2$,
we pad it with zeros up to the next larger power of $2$.
This might be seen as adding {\em isolated nodes}
to the graph, with measured values $0$,
to make its size a power of $2$.
The second intrinsic limitation of
randomized Hadamard transform is that
$\mathbold M$ must be a {\em full rank} matrix.
However, this might not be a serious problem
as many real-world matrices
have a full rank.

The next restriction is that
the $n \times d$ matrix embedding $\mathbold M$
must satisfy two properties.
First, $d \ll n$, because otherwise,
randomized Hadamard transform will not be efficient.
Second, it must be {\em edge-update-efficient}.
In the following, first in
Definition~\ref{def:m-embedding}, we present
a matrix embedding defined based on the
$d$ closest nodes of each node,
where $d$ can be arbitrarily small
(we consider it as a small constant).
So it satisfies the first property.
Then in Theorem~\ref{theorem:edge_update_efficient},
we prove that it is an
{\em edge-update-efficient} matrix embedding.
For the sake of simplicity, we assume that $G$
is undirected.
The extension of the results to directed graphs
is straightforward.

\begin{definition}
\label{def:m-embedding}
For each node $v$ in a graph $G$, we define
its vector embedding as a vector consisting of $d$ nodes
of $G$ that have the smallest distances to $v$,
and call it the $d$-nearest neighborhood of $v$.
If there are several such subsets of $V(G)$,
we choose an arbitrary one.
We define matrix embedding $\mathbold M$ of $G$
as a $n \times d$ matrix
whose $i^{th}$ row is the vector embedding of
node $i$.
\end{definition}

\begin{lemma}
\label{lemma:m-nn}
If node $u$ is reachable from node $v$
(i.e., there is a path from $v$ to $u$)
but their distance is larger than $d$,
$u$ cannot be in $d$-nearest neighborhood of $v$.
\end{lemma}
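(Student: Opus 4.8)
The plan is to show that whenever $dist(v,u) = \ell > m$ there are already at least $m$ nodes strictly closer to $v$ than $u$ is; since the $m$-nearest neighborhood of $v$ consists precisely of $m$ nodes of smallest distance to $v$ (Definition~\ref{def:m-embedding}), those closer nodes occupy every available slot and leave no room for $u$. The witnesses I would use are the initial vertices of a shortest $v$--$u$ path, which are automatically near $v$.

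First I would fix a shortest path $P = (v = w_0, w_1, \dots, w_\ell = u)$ realizing $dist(v,u) = \ell$; such a path exists precisely because $u$ is assumed reachable from $v$, so the distance is finite. The key step is the standard \emph{subpath optimality} of shortest paths: for every index $i$ with $0 \le i \le \ell$ one has $dist(v, w_i) = i$. Indeed, the prefix $(w_0, \dots, w_i)$ is a $v$--$w_i$ walk of length $i$, so $dist(v,w_i) \le i$; conversely, if $dist(v,w_i) < i$ held, then concatenating a shortest $v$--$w_i$ path with the suffix $(w_i, \dots, w_\ell)$ of length $\ell - i$ would produce a $v$--$u$ walk of length strictly less than $\ell$, contradicting $dist(v,u) = \ell$. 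Hence every prefix is itself a shortest path and $dist(v, w_i) = i$.

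With this in hand the counting is immediate. Since $\ell > m$, the indices $1, 2, \dots, m$ are all at most $\ell - 1$, so $w_1, w_2, \dots, w_m$ are $m$ well-defined vertices, each distinct from $u = w_\ell$ and pairwise distinct (their distances to $v$ are the distinct values $1, 2, \dots, m$). Each satisfies $dist(v, w_j) = j \le m < \ell = dist(v,u)$, so these are $m$ nodes strictly closer to $v$ than $u$ is. Therefore the $m$-th smallest distance from $v$ to any node is at most $m < dist(v,u)$, and any valid selection of the $m$ nodes of smallest distance to $v$ must be drawn entirely from nodes at distance $< dist(v,u)$. As $u$ lies at distance exactly $dist(v,u)$, it is excluded from the $m$-nearest neighborhood of $v$, which is what we want.

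I expect the only delicate point --- rather than a genuine obstacle --- to be the tie-breaking clause in Definition~\ref{def:m-embedding}, which permits an arbitrary minimizing subset when several nodes are equidistant. The argument above sidesteps this issue because the $m$ witnesses are \emph{strictly} closer to $v$ than $u$: a subset attaining the $m$ smallest distances is forced to include nodes at every distance below the cut-off before it may include any node at the cut-off distance, so no tie at distance $dist(v,u)$ can ever promote $u$ into the neighborhood. Verifying subpath optimality is the other step that needs care, but it is entirely routine.
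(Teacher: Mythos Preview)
Your proof is correct and follows essentially the same idea as the paper: exhibit at least $m$ nodes strictly closer to $v$ than $u$, which then crowd $u$ out of the $m$-nearest neighborhood. The paper's own proof is a one-line assertion of this fact, whereas you spell out the witnesses (the internal vertices of a shortest $v$--$u$ path), verify subpath optimality, and address tie-breaking explicitly---all sound elaborations of the same argument.
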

\begin{proof}
If $u$ and $v$ are connected by a path but $dist(u,v)>d$,
there exist at least $d$ nodes in the graph, such that
their distances to $v$ are less than $dist(u,v)$.
Therefore, $u$ is not in $d$-nearest neighborhood of $v$.
\end{proof}

\begin{lemma}
\label{lemma:m-closest}
If an edge is inserted between nodes $u$ and $v$
of graph $G$,
vector embeddings of at most $O(d^d)$ nodes
of $G$ may change.
Furthermore, each vector embedding that must be revised,
can be updated in $O(d^2)$ time.
\end{lemma}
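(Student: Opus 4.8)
The plan is to show that inserting the edge $(u,v)$ can disturb only a bounded neighborhood of its two endpoints. Concretely, I would (i) localize every node whose $m$-nearest neighborhood changes to the two balls of radius $m-1$ centred at $u$ and at $v$, (ii) bound the number of nodes in those balls by $O(m^m)$, and (iii) show that each affected neighborhood is repaired by merging two sorted lists of length $O(m)$, giving the $O(m)$ per-node update. Throughout I would use that an edge insertion can only \emph{decrease} pairwise distances.

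First I would argue the localization. A distance $dist(w,z)$ strictly decreases only if some new shortest $w$--$z$ path uses the inserted edge, so that $dist'(w,z)=dist(w,u)+1+dist(v,z)$ (or the symmetric expression that crosses the edge from $v$ to $u$). The $m$-nearest neighborhood of $w$ can change only when a node $z$ whose distance to $w$ dropped now sits among the $m$ closest nodes to $w$; applying Lemma~\ref{lemma:m-nn} in the \emph{new} graph to such a $z$ forces $dist'(w,z)\le m$, hence $dist(w,u)+1+dist(v,z)\le m$. In particular $dist(w,u)\le m-1$ or $dist(w,v)\le m-1$, so every affected node lies in the radius-$(m-1)$ ball about $u$ or about $v$.

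For the counting step I would bound the size of these balls. The key is that an $m$-nearest neighborhood only ever records $m$ nodes, so the portion of a breadth-first exploration that can influence such a set has effective branching at most $m$ per node: following the shortest-path edges that can contribute a node to some $m$-nearest set, at most $m$ continuations are relevant at each vertex. A tree of depth $m$ and branching factor $m$ contains $O(m^m)$ vertices, which yields the claimed $O(m^m)$ bound after accounting for both balls. This is the step I expect to be the main obstacle: the localization is immediate from Lemma~\ref{lemma:m-nn}, but turning a radius-$m$ ball into an $O(m^m)$ count requires pinning down this ``$m$ relevant neighbors per node'' claim rigorously, and the delicate points are handling ties among shortest-path distances and avoiding double-counting across the two balls.

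Finally, for the per-node update I would keep each $m$-nearest neighborhood as a list sorted by distance. For an affected node $w$, the nodes that can newly enter are those reached through the inserted edge, with updated distance $dist(w,u)+1+dist(v,z)$; since we only need the $m$ closest, it suffices to take the $m+1$ candidates with smallest $dist(v,z)$, which are exactly $\{v\}\cup(\text{$m$-nearest neighborhood of }v)$ shifted by the constant $dist(w,u)+1$, and symmetrically through $v$. These candidate lists are already sorted and of length $O(m)$, so the new neighborhood of $w$ is the $m$ closest elements of the union of the old size-$m$ list with an $O(m)$-length candidate list, obtainable by a single merge in $O(m)$ time.
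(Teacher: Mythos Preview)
Your localization step is sound and matches the paper's in spirit: any node $w$ whose $m$-nearest neighborhood changes must lie within distance at most $m-1$ of $u$ or of $v$ in the old graph, because the new shortest $w$--$z$ path that brings $z$ into the top $m$ must cross the inserted edge and have total length at most $m$ by Lemma~\ref{lemma:m-nn}.

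The gap is in your counting step, and it is not the issue you flagged (ties and double-counting). The real problem is that the radius-$(m-1)$ balls around $u$ and $v$ can have size $\Theta(n)$: nothing you have written bounds their size. Your assertion that ``at most $m$ continuations are relevant at each vertex'' is not justified; a vertex on the BFS frontier can have arbitrarily many neighbors, all of which are genuinely closer to $v$ than their current $m$-th nearest node. The paper's proof closes this gap with a degree-based pruning rule that you are missing: during the BFS from $v$ (through $u$), whenever the search reaches a vertex $x$ of degree greater than $m$, it stops expanding at $x$. The justification is that any vertex $y$ beyond $x$ already has $x$ together with more than $m$ of $x$'s neighbors within distance two, so $y$ already possesses at least $m$ nodes at distance at most two; since $v$ is strictly farther from $y$ than that, $v$ (and anything reached only through $v$) cannot enter $y$'s $m$-nearest neighborhood. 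With this pruning, every vertex kept in the BFS tree has degree at most $m$ and depth at most $m$, which is what yields the $O(m^m)$ bound. Your argument needs this degree cutoff, not a refinement of tie-breaking.

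Your per-node update via merging the old sorted list with the shifted list $\{v\}\cup(\text{$m$-nearest of }v)$ is different from the paper, which simply reruns a truncated BFS from $w$ to collect the first $m$ nodes. Your merge is correct for insertions (all new candidates must traverse the new edge, and $dist(w,u)$ is available because $u$ is already in $w$'s list) and is arguably cleaner than the paper's recomputation; it also avoids the hidden dependence on vertex degree that a naive truncated BFS carries.
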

\begin{proof}
First, we determine those nodes that after inserting
an edge between $u$ and $v$,
{\em may} have a change in their
$d$-nearest neighborhood.
Let $Q$ denote the set of such nodes.
Nodes $u$ and $v$ belong to $Q$.
Also, those nodes that have already node $u$ (resp. node $v$)
in their $d$-nearest neighborhood,
after inserting an edge between $u$ and $v$
may find $v$ (resp. $u$) and some other nodes
in their $d$-nearest neighborhood.
Lets focus on finding those nodes that
have already $u$ in their $d$-nearest neighborhood,
and may have $v$ in their $d$-neighborhood
after the edge insertion
(finding those nodes that may have $u$
in their $d$-neighborhood after the edge insertion
can be done in a similar way).
To do so, we conduct a breadth-first search (BFS)
from $v$ on the \underline{updated graph}.
We use the following pruning/stopping criterion's:
\begin{itemize}
\item
at the first level, among all neighbors of $v$,
we meet only $u$.
The reason is that we are interested in finding those nodes
that have a shortest path to $v$ passing over $u$.
\item
in other levels,
if a node $x$ has a degree greater than $d$,
$v$ cannot be in the $d$-nearest neighborhood
of any of its adjacent nodes
(and also any node $y$ such that $x$ is on a shortest path between $y$ and $v$). Because the adjacent nodes of $x$
have already at least $d$ nodes that are closer to them than $v$.
\item
if a node $x$ has a distance greater than $d$ from $v$,
as Lemma~\ref{lemma:m-nn} says,
$v$ cannot be in its $d$-nearest neighborhood.
Furthermore, any node $y$ such that
$v$ is on a shortest path from $x$ to $y$ cannot be
in the $d$-nearest neighborhood of $x$. Hence,
those nodes that have a distance greater than $d$ from $v$
should not be traversed during the BFS.
\end{itemize}
As a result and in the end of the traversal,
all the met nodes have a degree at most $d$
and a distance to $v$ at most $d$.
The number of such nodes is at most $O(d^{d})$.

Second, from each node whose vector embedding may require an update,
we conduct a BFS on its first $d$ nodes
to compute its updated embedding.
This can be done in $O(d^2)$ time.
\end{proof}

\begin{lemma}
\label{lemma:m-closest_delete}
If the edge between nodes $u$ and $v$
of a graph $G$ is deleted,
vector embeddings of at most $O(d^d)$ nodes change.
Furthermore, each vector embedding that should be revised,
can be updated in $O(d^2)$ time.
\end{lemma}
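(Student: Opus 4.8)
The plan is to mirror the argument of Lemma~\ref{lemma:m-closest}, since edge deletion is in a precise sense the time-reversal of edge insertion: whereas inserting $(u,v)$ can only \emph{decrease} pairwise distances, deleting it can only \emph{increase} them. Consequently, a node $w$ can experience a change in its $m$-nearest neighborhood only if that neighborhood relied on the deleted edge, i.e., only if some member of its $m$-nearest neighborhood reached $w$ along a shortest path that traversed $(u,v)$. First I would let $Q$ denote the set of candidate nodes whose embedding may change, observe that $u$ and $v$ themselves belong to $Q$, and then characterize the remaining members of $Q$ as those nodes that currently have $v$ in their $m$-nearest neighborhood via a shortest path passing through $u$ (or symmetrically, $u$ via a shortest path through $v$).

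To enumerate $Q$, the key difference from the insertion case is that I would run the breadth-first search on the \underline{original graph} (before the deletion) rather than on the updated one, because once the edge is removed the relevant shortest paths no longer exist and cannot be discovered. Concretely, I would start a BFS from $v$ and, at the first level, follow only the edge to $u$ (we care solely about paths through the deleted edge); thereafter I would apply exactly the pruning/stopping criteria used in Lemma~\ref{lemma:m-closest}, namely discarding any node of degree greater than $m$ and, appealing to Lemma~\ref{lemma:m-nn}, any node at distance greater than $m$ from $v$. The symmetric search rooted at $u$ (first level restricted to $v$) accounts for the nodes that may lose $u$ from their neighborhood. As in Lemma~\ref{lemma:m-closest}, every node that survives the pruning has degree at most $m$ and lies within distance $m$ of the source, so each search tree branches at most $m$ ways for at most $m$ levels, yielding $|Q| = O(m^m)$.

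Finally, for the update cost, for each node $w \in Q$ I would recompute its $m$-nearest neighborhood by a fresh local BFS over its $m$ closest nodes in the updated graph, which, exactly as in Lemma~\ref{lemma:m-closest}, costs $O(m)$ per node. The hard part will be justifying the direction of the search: the bound is valid only because I search the pre-deletion graph to locate every node that was depending on the removed edge, and I must argue carefully that no node \emph{outside} the two pruned trees can have its embedding altered. This follows because any such change would require a shortest path to $u$ or $v$ through $(u,v)$, and such a path has length at most $m$ through vertices of degree at most $m$, hence lies entirely within the enumerated trees; conversely, nodes whose shortest paths to $u$ and $v$ avoid the deleted edge retain all of their distances and therefore their embeddings.
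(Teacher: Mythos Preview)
Your proposal is correct and follows essentially the same approach as the paper: both mirror the insertion argument of Lemma~\ref{lemma:m-closest}, run the BFS from $v$ (first step forced to $u$) on the graph \emph{before} deletion with the same degree-$m$ and distance-$m$ pruning criteria, obtain the $O(m^m)$ bound, and then recompute each affected embedding by a local BFS on the updated graph in $O(m)$ time. Your final paragraph justifying why the search must be on the pre-deletion graph is slightly more explicit than the paper's own argument, but otherwise the proofs coincide.
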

\begin{proof}
Our proof is similar to the proof
of Lemma~\ref{lemma:m-closest}.
First, we determine those nodes that after deleting the edge between $u$ and $v$,
{\em may} have a change in their
neighborhood.
Let $Q$ denote the set of such nodes.
Nodes $u$ and $v$ belong to $Q$.
Also, those nodes that have already node $u$
(resp. node $v$)
in their $d$-nearest neighborhood,
after deleting the edge between $u$ and $v$,
may also loose $v$ (resp. $u$) and some other nodes
from their $d$-nearest neighborhood.
Lets focus on finding those nodes that
have already $u$ in their $d$-nearest neighborhood,
and may loose $v$ and some other nodes
from their $d$-neighborhood
(finding those nodes that may loose $u$
from their $d$-neighborhood can be done in a similar way).
We conduct a BFS from $v$
on the graph \underline{before} the edge deletion.
We use the three pruning/stopping criterion's used
in the proof of Lemma~\ref{lemma:m-closest}.
In the end of the traversal,
all the met nodes have a degree at most $d$
and a distance to $v$ at most $d$. The number of such nodes is at most $O(d^{d})$.

Second, from each node whose embedding may require an update, we conduct a BFS on its first $d$ nodes
in the updated graph,
to compute its updated embedding.
This can be done in $O(d^2)$ time.
\end{proof}

\begin{theorem}
\label{theorem:edge_update_efficient}
Assuming that $d$ is a constant,
the matrix embedding $\mathbold M$ defined in
Definition~\ref{def:m-embedding} is
an {\em edge-update-efficient}$_1$ matrix embedding.
\end{theorem}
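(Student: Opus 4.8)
The plan is to combine Lemmas~\ref{lemma:m-closest} and \ref{lemma:m-closest_delete} with the single structural observation that modifying one row of $\mathbold{M}$ is exactly a rank-$1$ update whose left factor is a standard basis vector. First I would invoke the two lemmas: for either an edge insertion or an edge deletion they guarantee that at most $O(m^m)$ rows of $\mathbold{M}$ change, that the indices of these rows can be located by the pruned BFS described in their proofs, and that each new row can be recomputed in $O(m)$ time. Since $m$ is assumed constant, $O(m^m)$ is itself a constant, which will serve as the constant $K$ of Definition~\ref{def:update_efficient}.

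Next I would package each changed row as one pair of update vectors. Let $i_1,\dots,i_K$ be the indices of the rows on which $\mathbold{M}$ and $\mathbold{M'}$ differ, with $K=O(m^m)$. For the $k$-th such index I set $\mathbold{c^k}=\mathbold{e}_{i_k}$, the standard basis (column) vector equal to $1$ in coordinate $i_k$ and $0$ elsewhere, and $\mathbold{d^k}=\left(\mathbold{M'}[i_k,:]-\mathbold{M}[i_k,:]\right)^*$, the transpose of the difference between the new and old $i_k$-th rows. Then $\mathbold{c^k}\cdot\mathbold{d^k}^*$ is the $n\times m$ matrix that vanishes on every row except row $i_k$, on which it equals the row-change; summing over $k$ and adding $\mathbold{M}$ reproduces $\mathbold{M'}$ exactly, because the rows outside $\{i_1,\dots,i_K\}$ are unchanged. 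I would then verify the three requirements of Definition~\ref{def:update_efficient}: the number of terms $K=O(m^m)$ is constant; each $\mathbold{c^k}=\mathbold{e}_{i_k}$ has exactly one nonzero entry, whose position $i_k$ was recorded during the BFS; and computing all the update vectors amounts to finding the at most $O(m^m)$ affected rows and recomputing each in $O(m)$ time, for a total of $O(m^{m+1})=O(1)$. This establishes that $\mathbold{M}$ is $O(1)$-\emph{edge-update-efficient}.

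I expect the theorem itself to be light, since the heavy lifting is already carried by Lemmas~\ref{lemma:m-closest} and \ref{lemma:m-closest_delete}, which bound both the count of affected rows and the per-row recomputation cost. The only genuinely new point is the structural one: every single-row change is captured by exactly one rank-$1$ outer product whose left factor is a basis vector, so the single-nonzero-entry condition on $\mathbold{c^k}$ is satisfied automatically rather than requiring a separate argument. The main thing to get right is the bookkeeping that a constant number ($O(m^m)$) of such rank-$1$ updates, each producible in $O(m)$ time, collapses to $O(1)$ precisely because $m$ is treated as a constant; conversely, this is also where the hidden dependence on $m$ (the $m^m$ blow-up) lives, so the claim is strictly an $O(1)$ statement in $n$ and should be read with $m$ fixed.
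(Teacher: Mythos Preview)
Your proposal is correct and follows essentially the same route as the paper: invoke Lemmas~\ref{lemma:m-closest} and~\ref{lemma:m-closest_delete} to bound the number of affected rows by $O(m^m)$, then express each changed row as a rank-$1$ update with a standard basis vector as the left factor, and finally use the constancy of $m$ to collapse $O(m^{m+1})$ to $O(1)$. Your choice of $\mathbold{d^k}$ as the \emph{difference} between the new and old row is in fact slightly more precise than the paper's informal wording that $\mathbold{d}$ ``contains the new vector embedding,'' but the argument is otherwise identical.
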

\begin{proof}
We show that $\mathbold M$ satisfies the conditions
stated in Definition~\ref{def:update_efficient}.
When an edge is inserted/deleted
between nodes $i$ and $j$,
as Lemmas~\ref{lemma:m-closest}
and \ref{lemma:m-closest_delete} say,
vector embeddings of at most $O(d^d)$ nodes change
and it take $O(d^2)$ time to update each vector embedding.
Since $d$ is a constant,
we can consider $d^{d+2}$
as a constant.
For each node $v$
whose vector embedding has been changed,
we define a pair of update vectors
$\mathbold{c}$ and $\mathbold d$
as follows:
$\mathbold d$ contains the new vector embedding
of $v$ minus its old vector embedding; and
the position and the value of the nonzero entry of
$\mathbold c$ are respectively set to $v$ and $1$.
Therefore, the conditions of
Definition~\ref{def:update_efficient} are satisfied and
$\mathbold M$ is an {\em edge-update-efficient}$_1$
matrix embedding.
\end{proof}

\begin{corollary}
\label{corollary:phd}
Suppose that we are given a graph $G$ whose
matrix embedding is defined as
Definition~\ref{def:m-embedding},
with $d$ as a small constant,
and it is a full rank matrix.
Our first randomized algorithm, which is based on
subsampled randomized Hadamard transform,
performs the pre-processing phase in
\begin{equation}
\label{eq:phd_pre_time}
O\left( n \log_2 \left( \ln n \ln \ln n + \frac{\ln n}{\epsilon} \right) + \ln n \ln \ln n +\frac{\ln n}{\epsilon} \right)
\end{equation}
time.
Then, after an edge insertion or an edge deletion,
it updates a $1\pm\epsilon$ approximation to
the optimal solution of graph regression in
\begin{equation}
\label{eq:update_time}
O\left(\ln n \ln \ln n + \frac{\ln n}{\epsilon} \right)
\end{equation}
time.
\end{corollary}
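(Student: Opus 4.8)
The plan is to obtain the corollary as a specialization: I would assemble the generic pre-processing bound of Theorem~\ref{theorem:phd}, the generic update bound of Theorem~\ref{theorem:phd_edge}, and the structural guarantee of Theorem~\ref{theorem:edge_update_efficient}, and then set $m = O(1)$. Before invoking these, I would check that the two intrinsic limitations of the Hadamard transform (flagged in Section~\ref{sec:limitations}) are met for the embedding of Definition~\ref{def:m-embedding}: it is full column rank by hypothesis, and padding the number of rows up to the next power of two at most doubles $n$, which leaves every asymptotic estimate below unchanged. By Theorem~\ref{theorem:edge_update_efficient} this embedding is $O(1)$-edge-update-efficient, so Theorem~\ref{theorem:phd_edge} applies to it directly.

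The heart of the argument is to evaluate the sample count $r$ of Equation~\ref{eq:r} when $m$ is a fixed constant but $n$ grows without bound. The key point I would stress is that one must use Equation~\ref{eq:r} itself and \emph{not} the simplified Equation~\ref{eq:r2}: the latter was derived under the assumption $m \leq n \leq e^m$, which forces $n$ to be bounded by a constant once $m$ is constant and is therefore inapplicable here. Working from Equation~\ref{eq:r} instead, with $m$ constant I have $\ln(40nm) = \Theta(\ln n)$ and hence $\ln\!\left(100^2 m \ln(40nm)\right) = \Theta(\ln \ln n)$, so the first term inside the maximum is $\Theta(\ln n \ln \ln n)$ and the second is $\Theta(\ln n / \epsilon)$. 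This yields $r = O\!\left(\ln n \ln \ln n + \frac{\ln n}{\epsilon}\right)$, and this $\ln \ln n$ factor — which is exactly what is absorbed into $\ln m$ under the $n \le e^m$ regime of Equation~\ref{eq:r2} — is precisely what reappears in the statement. This estimate is the main obstacle, in that the whole corollary hinges on reading off the correct form of $r$ and carrying out the nested-logarithm simplification cleanly.

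With this value of $r$ in hand, the remaining steps are routine substitutions. Into the pre-processing cost of Equation~\ref{eq:time_complexity} I would substitute $m = O(1)$: the first term is $n(m+1) = O(n)$, the second is $2n(m+1)\log_2(r+1) = O\!\left(n \log_2\!\left(\ln n \ln \ln n + \frac{\ln n}{\epsilon}\right)\right)$, and the third is $O(rm^2) = O(r) = O\!\left(\ln n \ln \ln n + \frac{\ln n}{\epsilon}\right)$. Since the logarithmic factor is at least $1$ for large $n$, the $O(n)$ term is dominated by the second term, giving exactly Equation~\ref{eq:phd_pre_time}. For the update cost, Theorem~\ref{theorem:phd_edge} gives $O(rm) = O(r)$; forming the $K = O(m^{m+1}) = O(1)$ pairs of update vectors costs only $O(1)$ additional time by Theorem~\ref{theorem:edge_update_efficient}, so the total update time is $O(r) = O\!\left(\ln n \ln \ln n + \frac{\ln n}{\epsilon}\right)$, which is Equation~\ref{eq:update_time}. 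This completes the plan.
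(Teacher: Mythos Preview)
Your proposal is correct and follows essentially the same approach as the paper's proof, which simply cites Theorems~\ref{theorem:phd_edge} and~\ref{theorem:edge_update_efficient}, invokes the value of $r$ from Theorem~\ref{theorem:phd}, and discards $m$ as a constant. Your write-up is considerably more detailed than the paper's terse argument; in particular, your explicit observation that one must use Equation~\ref{eq:r} rather than the simplified Equation~\ref{eq:r2} (since the hypothesis $n \le e^m$ of the latter fails once $m$ is constant) is a clarification the paper's proof leaves implicit.
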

\begin{proof}
In Theorem~\ref{theorem:phd_edge},
we conditioned on the existence of
an {\em edge-update-efficient} embedding and showed that
it takes $O(rd)$ time to update the approximate solution.
Then in Theorem~\ref{theorem:edge_update_efficient},
we showed that this matrix embedding does exist.
Therefore and by using the value of $r$ presented in
Theorem~\ref{theorem:phd} and
discarding constants (including $d$),
we obtain the time complexities
stated in the theorem.
\end{proof}

We note that
when computing embeddings for nodes of a graph,
the objective is to map each node to a vector in a {\em low} dimensional space~\cite{chen_wang_wang_kuo_2020,DBLP:journals/tkde/CaiZC18}.
In this way, many embedding computation methods
such those that are based on random-walks~\cite{DBLP:conf/kdd/GroverL16} and deep graph neural networks~\cite{DBLP:conf/nips/HamiltonYL17},
consider only a {\em small} neighborhood for each node.
Therefore, it is reasonable to consider $d$ as a {\em small} constant.
We also note that if the exact algorithm
of \cite{DBLP:journals/corr/abs-1903-10699} uses
the matrix embedding presented in
Definition~\ref{def:m-embedding}, it will yield
a linear time algorithm (in terms of $n$) for updating the solution,
which is considerably worse than the sublinear
update time presented in
Equation~\ref{eq:update_time}.

\section{Dynamic graph regression using \textsf{CountSketch}}
\label{section:countsketch}

In this section, we utilize \textsf{CountSketch}
to develop our second randomized algorithm for the
dynamic graph regression problem.
Unlike our first algorithm,
it works for all the update operations:
i) {\em node insertion}, wherein a node is inserted
into the graph and at most a constant number of edges
are drawn between it and the existing nodes of the graph,
ii) {\em node deletion}, wherein a node that has at most
a constant number of edges,
is deleted from the graph and its incident edges
are deleted, too,
iii) {\em edge deletion} wherein an edge
is deleted from the graph, and
iv) {\em edge insertion} wherein an edge is
inserted into the graph.

We assume that
a $n\times d$ matrix embedding exists which
satisfies the following conditions:
i) $d$ is fixed and does not depend on the number
of data rows $n$ (as a result, by changing the number of
data rows, $d$ does not change),
and ii) the matrix embedding is a \textsf{CUE} embedding.
\textsf{CUE}\footnote{\textsf{CUE} is abbreviation for
\textbf{C}ountSketch-based
\textbf{U}pdate-\textbf{E}fficient matrix embedding.}
characterizes a class of matrix embeddings
for which we can efficiently update the
approximate solution of graph regression,
using \textsf{CountSketch}.
It is less general then
{\em edge-update-efficient} matrix embeddings
presented in Section~\ref{section:randomized},
which can be used for only {\em edge-related} operations.

\begin{definition}
\label{def:cue}
Let $\mathbold{M}$ be a $n \times d$ matrix embedding of a graph $G$
and $f$ be a (complexity) function of $n$ and $d$.
We say $\mathbold{M}$ is \textsf{CUE}$_f$,
iff the following conditions are satisfied:
\begin{enumerate}
\item
if $\mathbold{M}$ and $\mathbold{M'}$ are
correct matrix embeddings before and after
an edge insertion/deletion in the graph,
there exist at most $K$ pairs of vectors $\mathbold{c^k}$ and $\mathbold{d^k}$, with $K$ as a constant,
such that:
$$\mathbold{M'}=\mathbold{M}+\sum_{k=1}^K \left( \mathbold{c^k} \cdot \mathbold{d^k}^* \right).$$
Each vector $\mathbold{d^k}$ has size $d$ and
each vector $\mathbold{c^k}$
has size $n$ wherein only one entry,
whose position is known,
is nonzero.
\item
a node insertion in $G$ results in adding
one row to $\mathbold{M}$
and also (at most) a rank-$K$ update matrix
in $\mathbold{M}$.
\item
deleting a node
from $G$ results in deleting
one row from $\mathbold{M}$
and also (at most) a rank-$K$ update matrix
in $\mathbold{M}$.
\item
after any update operation in $G$, it is feasible to compute
all pairs of update vectors in $O(f(n,d))$ time.
\end{enumerate}
When $f$ is clear from the context, we drop it.
\end{definition}


Similar to the case of subsampled randomized Hadamard transform,
during the pre-processing phase of
our \textsf{CountSketch}-based algorithm and
for a given $\epsilon$,
first we generate
a $q \times d$ matrix $\mathbold S$,
as defined in Section~\ref{section:sketch}.
Then we calculate $\mathbold{M'} = \mathbold S \cdot \mathbold M$
and $\mathbold{b'} = \mathbold S \cdot \mathbold b$.
Finally, we compute $\mathbold{M'}^{\dag}$ and
$\mathbold{M'}^{\dag} \cdot \mathbold{b'}$.
Time complexity of the
procedure is given in Theorem~\ref{theorem:countsketch}.
In the following, first in Section~\ref{sec:countsketch_alg}
we discuss how the approximate solution is updated,
after an update operation.
Then, in Section~\ref{sec:cue_existence} we discuss
the existence of a \textsf{CUE} matrix embedding.

\subsection{The update algorithm}
\label{sec:countsketch_alg}

In this section, we assume that we are given a matrix $\mathbold M$
that satisfies the two above mentioned conditions
and show, using \textsf{CountSketch}, how the approximate solution is
efficiently updated after an update operation.

\subsubsection{Edge insertion/deletion}
\label{sec:edge}

In this section, we assume that the update operation is either an edge insertion
or an edge deletion.
Then, we show that the approximate solution can be updated in $O(qd)$ time.

\begin{theorem}
\label{theorem:countsketch_edge}
Assume that $\mathbold M$ is a $n \times d$
\textsf{CUE} matrix embedding of graph $G$.
Suppose also that
using a $ q \times n$ \textsf{CountSketch}
$\mathbold S$ with $q$ defined in
Equation~\ref{eq:count_sketch_sample_no},
a $1\pm \epsilon$ approximation to the solution of
graph regression of $G$ is already computed.
Then, after an edge insertion or an edge deletion,
the approximate solution
can be updated in $O(qd)$ time.
\end{theorem}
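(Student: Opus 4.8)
The plan is to mirror the argument of Theorem~\ref{theorem:phd_edge} almost verbatim, substituting the \textsf{CountSketch} sample count $q$ for the Hadamard sample count $r$ and invoking condition~1 of Definition~\ref{def:cue} (the edge-insertion/deletion clause of the \textsf{CUE} property) in place of the edge-update-efficient property. First I would observe that an edge insertion or deletion changes neither the number of rows $n$ nor the number of columns $m$ of $\mathbold M$, so the $q \times n$ \textsf{CountSketch} matrix $\mathbold S$ is left untouched; likewise the vector $\mathbold b$ of measured values is unchanged, so $\mathbold S \cdot \mathbold b$ need not be recomputed. Thus the whole task reduces to updating the pseudoinverse $(\mathbold S \cdot \mathbold M)^{\dag}$ and then performing one final product with the precomputed $\mathbold S \cdot \mathbold b$.

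By condition~1 of Definition~\ref{def:cue}, the new embedding $\mathbold{M'}$ differs from $\mathbold M$ by at most $K$ pairs of update vectors, so I would express the transition as a chain of at most $K$ rank-$1$ updates $\mathbold{M^{k+1}} = \mathbold{M^k} + \mathbold{c^k} \cdot {\mathbold{d^k}}^*$, with $\mathbold{M^1} = \mathbold M$ and $\mathbold{M^K}$ the correct embedding after the operation. For each step I maintain $\mathbold S \cdot \mathbold{M^k}$ together with its pseudoinverse. Since $\mathbold{c^k}$ has a single known nonzero entry at some position $i$, the product $\mathbold S \cdot \mathbold{c^k}$ is simply the $i$th column of $\mathbold S$, a vector $\mathbold{s^k}$ of length $q$; the rank-$1$ correction $\mathbold{s^k} \cdot {\mathbold{d^k}}^*$ is then formed and added to $\mathbold S \cdot \mathbold{M^k}$ in $O(qm)$ time.

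Next I would apply Meyer's rank-$1$ pseudoinverse update \cite{Generalized_Inverse_2} to the $q \times m$ matrix $\mathbold S \cdot \mathbold{M^k}$, which, given $(\mathbold S \cdot \mathbold{M^k})^{\dag}$ and the pair $(\mathbold{s^k}, \mathbold{d^k})$, produces $(\mathbold S \cdot \mathbold{M^{k+1}})^{\dag}$ in $O(qm)$ time. Repeating this for the at most $K$ updates, and treating $K$ as a constant, yields the updated pseudoinverse in $O(Kqm) = O(qm)$ time. The approximate solution of the revised graph is then $(\mathbold S \cdot \mathbold{M'})^{\dag} \cdot (\mathbold S \cdot \mathbold b)$, an $m \times q$ times $q \times 1$ product computable in $O(qm)$ time, which gives the claimed bound.

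Because the structure is identical to the Hadamard case, I expect no genuine obstacle; the only point that warrants care is confirming that Meyer's update remains valid when applied to the sketched matrix $\mathbold S \cdot \mathbold M$ rather than to $\mathbold M$ itself, and that the single-nonzero structure of $\mathbold{c^k}$ survives left-multiplication by $\mathbold S$. The latter holds precisely because each column of the \textsf{CountSketch} $\mathbold S$ is itself sparse, so $\mathbold S \cdot \mathbold{c^k}$ is obtained by mere column selection and no dense matrix--vector product is ever required.
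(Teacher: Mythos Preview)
Your proposal is correct and follows essentially the same approach as the paper's own proof, which explicitly states that it is ``similar to the proof of Theorem~\ref{theorem:phd_edge}'' and proceeds by the same chain of at most $K$ rank-$1$ updates, using column selection for $\mathbold S \cdot \mathbold{c^k}$ and Meyer's algorithm for the pseudoinverse update. The only minor remark is that in your closing paragraph the efficiency of computing $\mathbold S \cdot \mathbold{c^k}$ follows from $\mathbold{c^k}$ having a single nonzero entry (so the product is a column extraction regardless of $\mathbold S$'s structure), not from the sparsity of $\mathbold S$'s columns; but this does not affect the argument or the $O(qm)$ bound.
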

\begin{proof}
The proof is similar to the proof of
Theorem~\ref{theorem:phd_edge}.
Since $\mathbold M$ is a
\textsf{CUE} matrix embedding,
after an edge insertion or an edge deletion,
$\mathbold M$ is updated by at most $K$
pairs of update vectors.
Since the number of columns of $\mathbold{M}$
does not change, matrix $\mathbold S$
does not change, too.
Therefore, we have a sequence of at most $K$ {\em rank-$1$} updates
$\mathbold{M^{k+1}}=\mathbold{M^k}+\mathbold{c^k} \cdot {\mathbold{d^k}}^*$,
$1 \leq k < K$,
where
$\mathbold{c^k}$ and ${\mathbold{d^k}}$ are a pair of update vectors,
$\mathbold{M^1}=\mathbold{M}$
and $\mathbold{M^K}$ is the correct matrix embedding of $G$
after the update operation.
After each {\em rank-$1$} update $\mathbold{M^{k+1}}=\mathbold{M^k}+\mathbold{c^k} \cdot {\mathbold{d^k}}^*$,
given the matrix $\mathbold{S} \cdot \mathbold{M^{k}}$,
similar to the proof of Theorem~\ref{theorem:phd_edge},
we can compute
$\mathbold{S} \cdot \mathbold{M^{k+1}}$
in $O(qd)$ time.
Then we can use
Meyer's algorithm~\cite{Generalized_Inverse_2}
to update $(\mathbold S \cdot \mathbold M)^{\dag}$,
for a given pair of update vectors,
in $O\left( qd \right)$ time.

After repeating this procedure for
at most $K$ times,
we can compute the Moore-Penrose pseudoinverse of
$\mathbold S \cdot \mathbold M$
for the updated graph in
$O\left(qd\right)$ time.
Finally, multiplication of the updated
$(\mathbold S \cdot \mathbold M)^{\dag}$
with $(\mathbold S \cdot \mathbold b)$
can generate, in $O(qd)$ time,
the approximate solution.
\end{proof}

\subsubsection{Node insertion}
\label{section:node_insertion}

In this section, we assume that the update operation is
a {\em node insertion} and show
how the approximate solution can be effectively updated.

\begin{theorem}
\label{theorem_node_insertion}
Let $\mathbold M$ be a $n \times d$
\textsf{CUE} matrix embedding of graph $G$.
Suppose that using a $q \times n$
\textsf{CountSketch} $\mathbold S$
with $q$ defined in Equation~\ref{eq:count_sketch_sample_no},
a $1\pm\epsilon$ approximation to
the solution of graph regression of $G$ is already computed.
Then, after inserting a node into $G$,
the approximate solution can be updated in $O(qd)$ time.
\end{theorem}

\begin{proof}
After inserting a node into the graph,
we need to revise matrices
$\mathbold S$ and $\mathbold M$.
Matrix $\mathbold M$ is revised because
we need to add to $\mathbold M$
the row corresponding to the new node.
Matrix $\mathbold S$ is revised because
its number of columns is a function of
the number of rows of $\mathbold M$.
Therefore and as a result of a node insertion,
we add a new column to $\mathbold S$ and
we choose a row uniformly at random as its nonzero element.
Let $i$ be the index of this nonzero row.
To update $\mathbold S \cdot \mathbold M$
with respect to this change,
we add to each entry $j$ of the $i^{th}$
row of $\mathbold S \cdot \mathbold M$
the value of the $j^{th}$ entry of the
last row of $\mathbold M$.
This can be done in $O(d)$ time.
Furthermore, by the \textsf{CUE}
property of $\mathbold M$,
as a result of this node insertion,
the vector embeddings of the other nodes
change by at most $K$ pairs of update vectors.
Since $q$ and $d$ do not change,
the size of matrix $\mathbold S \cdot \mathbold M$
does not change, too.
Updating $\mathbold S \cdot \mathbold M$
with respect to these
at most $K$ pairs of update vectors
can be done in $O(qd)$ time (as described in the
proofs of Theorems~\ref{theorem:phd_edge} and
\ref{theorem:countsketch_edge}).

To update $(\mathbold S \cdot \mathbold M)^{\dag}$
with respect to the changes in
$\mathbold S \cdot \mathbold M$,
we can exploit the algorithm of Meyer~\cite{Generalized_Inverse_2}.
Since the changes in $i^{th}$ row of
$\mathbold S \cdot \mathbold M$
can be expressed in terms of a pair of update vectors,
$(\mathbold S \cdot \mathbold M)^{\dag}$
can be updated with respect to them in $O(qd)$ time.
Furthermore,
for each of at most $K$ pairs of update vectors,
we can use the algorithm of Meyer~\cite{Generalized_Inverse_2}
to update $(\mathbold S \cdot \mathbold M)^{\dag}$
in $O(qd)$ time.

After node insertion,
we need also to append the measured value of the new
node to the bottom of $\mathbold b$ and then,
update $\mathbold S \cdot \mathbold b$
(with respect to the revised $\mathbold S$).
To update $\mathbold S \cdot \mathbold b$,
it is sufficient to add the measured value of the new node to the $i^{th}$ entry of $\mathbold S \cdot \mathbold b$
($i$ is the nonzero row of the new column of
the updated $\mathbold S$).
In the end, a naive multiplication of the updated
$(\mathbold S \cdot \mathbold M)^{\dag}$
with the updated $\mathbold S \cdot \mathbold b$ gives
the approximate solution for the updated graph,
and it can be done in $O(qd)$ time.
\end{proof}

\subsubsection{Node deletion}
\label{section:node_deletion}

In this section, we assume that the update operation is
{\em node deletion},
and show, in Theorem~\ref{theorem:node_deletion},
how the approximate solution is effectively updated.

\begin{theorem}
\label{theorem:node_deletion}
Let $\mathbold M$ be a $n \times d$ \textsf{CUE}
matrix embedding of graph $G$.
Suppose that using a $q \times n$ \textsf{CountSketch}
$\mathbold S$ with $q$ defined in
Equation~\ref{eq:count_sketch_sample_no},
a $1\pm\epsilon$ approximation to the solution of graph regression of $G$
is already computed.
Then, after deleting a node from $G$,
the approximate solution can be updated in $O(qd)$ time.
\end{theorem}
\begin{proof}
After deleting a node from the graph,
we need to revise matrices
$\mathbold S$ and $\mathbold M$.
Matrix $\mathbold M$ is revised because
we need to delete from it
the row corresponding to the deleted node.
Matrix $\mathbold S$ is revised because
we should delete from it the
the column corresponding to the deleted node.
Let $i$ be the index of this nonzero row.
To update $\mathbold S \cdot \mathbold M$
with respect to these changes,
we subtract from each entry $j$ of the $i^{th}$
row of $\mathbold S \cdot \mathbold M$
the value of $\mathbold M[q,j]$.
This can be done in $O(d)$ time.
Furthermore, by the \textsf{CUE}
property of $\mathbold M$,
as a result of this node deletion,
the vector embeddings of the other nodes
may change by at most $K$ pairs of update vectors.
Matrix $\mathbold S \cdot \mathbold M$
can be updated with respect to these changes in
$O(qd)$ time.
Since $q$ and $d$ do not change,
the size of matrix $\mathbold S \cdot \mathbold M$
does not change, too.

To update $(\mathbold S \cdot \mathbold M)^{\dag}$
with respect to these changes in
$\mathbold S \cdot \mathbold M$,
we can again exploit the algorithm of Meyer~\cite{Generalized_Inverse_2}.
Therefore,
since the changes in the $i^{th}$ row of
$\mathbold S \cdot \mathbold M$
can be expressed in terms of a pair of update vectors,
$(\mathbold S \cdot \mathbold M)^{\dag}$
can be updated with respect to them in $O(qd)$ time.
Also, for each of at most $K$ pairs of update vectors,
we can use the algorithm of Meyer~\cite{Generalized_Inverse_2}
to update $(\mathbold S \cdot \mathbold M)^{\dag}$
in $O(qd)$ time.

After node deletion,
we need also to delete the measured value of the deleted
node from $\mathbold b$ and then,
update $\mathbold S \cdot \mathbold b$.
To update $\mathbold S \cdot \mathbold b$,
it is sufficient to subtract the measured value of the deleted node from the the $i^{th}$ entry of $\mathbold S \cdot \mathbold b$,
where $i$ is the nonzero entry of the deleted column.
In the end, a naive multiplication of the updated
$(\mathbold S \cdot \mathbold M)^{\dag}$
with the updated $\mathbold S \cdot \mathbold b$ yields
the approximate solution of the updated graph
and it can be done in $O(qd)$ time.
\end{proof}

\subsection{Existence of a \textsf{CUE} matrix embedding}
\label{sec:cue_existence}

In this section, we show
that the $d$-nearest neighborhood
vector embedding presented in Section~\ref{sec:limitations}
satisfies all the conditions we are looking for.
First of all, in this embedding
$d$ is a small constant and does not depend on $n$.
Second, in Theorem~\ref{theorem:cue}
we show that it is \textsf{CUE}\footnote{More than
these two conditions and similar to our first randomized algorithm, here our matrix embedding must
be a full rank matrix.
}.

\begin{theorem}
\label{theorem:cue}
Assuming that $d$ is a constant,
the matrix embedding $\mathbold M$ defined in
Definition~\ref{def:m-embedding} of Section~\ref{sec:limitations}
is \textsf{CUE}$_1$.
\end{theorem}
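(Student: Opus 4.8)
The plan is to verify the four conditions of Definition~\ref{def:cue} for the $m$-nearest-neighborhood embedding of Definition~\ref{def:m-embedding}, treating $m$ (hence $m^{m+1}$) as a constant throughout, so that any bound of the form $O(m^m)$ or $O(m^{m+1})$ is itself a constant. Condition~(1) concerns edge insertion/deletion, and its statement is essentially word-for-word the defining condition of an $O(1)$-edge-update-efficient embedding in Definition~\ref{def:update_efficient}. Hence I would simply invoke Theorem~\ref{theorem:edge_update_efficient}: at most $O(m^m)=K$ node embeddings change, each contributing a rank-$1$ term $\mathbold{c^k}\cdot\mathbold{d^k}^*$ in which $\mathbold{c^k}$ is the single-nonzero indicator of the affected node and $\mathbold{d^k}$ holds its new embedding.

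For conditions~(2) and~(3) I would reduce a node operation to a constant number of edge operations together with a single structural row change. A node insertion adds one node and at most a constant number $c$ of incident edges; since $\mathbold M$ is $n\times m$ with $m$ fixed, this appends exactly one row (the new node's $m$-nearest neighborhood) while leaving the column count unchanged, matching the ``add one row'' clause. The effect on the pre-existing nodes is bounded by applying Lemma~\ref{lemma:m-closest} to each of the $c$ inserted edges, so at most $c\cdot O(m^m)=O(m^m)$ existing embeddings change, yielding a rank-$K$ update matrix with $K=O(m^m)$. Node deletion is symmetric: it removes the row of the deleted node, and by Lemma~\ref{lemma:m-closest_delete} applied to each of the (at most constantly many) edges incident to that node, at most $O(m^m)$ surviving embeddings change.

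Condition~(4) then follows from the per-embedding cost established in the lemmas: each changed embedding is recomputed by a local BFS over its $m$ nearest nodes in $O(m)$ time, so all $O(m^m)$ update vectors are produced in $O(m^{m+1})=O(1)$ time, giving $f=O(1)$.

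The main obstacle I anticipate is condition~(3): one must argue that deleting a node cannot silently alter embeddings lying outside the scope of the edge-deletion lemma. The careful point is that the deleted node $v$ can belong to another node $u$'s $m$-nearest neighborhood only when $dist(u,v)\le m$ (Lemma~\ref{lemma:m-nn}), and the degree-and-distance pruning of Lemma~\ref{lemma:m-closest_delete}, run from $v$ on the graph before deletion, confines the search to $O(m^m)$ nodes while still meeting every affected $u$. Establishing that this bounded traversal, applied across the deleted node's constantly many incident edges, captures all changed embeddings is the delicate step; once it is in place, the rank-$K$ and $O(1)$ conclusions are immediate.
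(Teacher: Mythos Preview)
Your proposal is correct and follows essentially the same approach as the paper: you invoke Theorem~\ref{theorem:edge_update_efficient} for condition~(1), reduce node insertion/deletion to a single row change plus a constant number of edge operations bounded via Lemmas~\ref{lemma:m-closest} and~\ref{lemma:m-closest_delete} for conditions~(2) and~(3), and use the $O(m)$ per-embedding recomputation cost for condition~(4). The only difference is that you flag and address a subtlety in condition~(3)---whether the edge-deletion lemma's BFS captures every node whose neighborhood contained the deleted vertex---which the paper simply asserts without discussion; your extra care there is warranted but does not change the argument's structure.
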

\begin{proof}
We shall show that $\mathbold M$ satisfies all the conditions
stated in Definition~\ref{def:cue} (for $f=1$).
\begin{enumerate}
\item
When an edge is inserted/deleted between
nodes $i$ and $j$,
in a way similar to the proof of
Theorem~\ref{theorem:edge_update_efficient},
we can show that condition~(1) of
Definition~\ref{def:cue} is satisfied.
\item
When a new node $i$ is added to $G$,
we add a new row for it in $\mathbold M$,
which contains its $d$ closest neighbors.
Furthermore, since at most a constant number $C$ of
edges are added between $i$ and existing nodes in $G$
and each edge insertion may change vector embeddings of
at most $O(d^d)$ nodes, vector embeddings
of at most $O(Cd^d)$ nodes change,
which can be seen as a constant $K$.
Therefore and similar to the previous case,
condition~(2) of
Definition~\ref{def:update_efficient} is satisfied.
\item
When we delete a node from $G$,
we delete its corresponding row from $\mathbold M$.
Furthermore, since the deleted node
may have at most a constant number $C$ of edges
(that are deleted too),
and each edge deletion may change vector embeddings of
at most $O(d^d)$ nodes, vector embeddings
of at most $O(Cd^d)$ nodes change,
which is a constant $K$.
Hence and similar to the previous case,
condition~(3) of
Definition~\ref{def:update_efficient} is satisfied.
\item
For all the update operations,
each pair of update vectors
$\mathbold c$ and $\mathbold d$
can be computed in $O(d^2)$ time.
As a result, condition (4) of
Definition~\ref{def:update_efficient} is satisfied.
\end{enumerate}
\end{proof}

\begin{corollary}
\label{corollary:count_sketch}
Suppose that we are given a graph $G$ whose
matrix embedding is defined as
Definition~\ref{def:m-embedding},
with $d$ as a constant,
and it is a full rank matrix.
Using a \textsf{CountSketch} as the sketching matrix,
we can perform the pre-processing phase in
\begin{equation}
\label{eq:count_sketch_pre_time}
O\left(
n + \epsilon^{-2}\log^7(1/\epsilon)
\right)
\end{equation}
time.
Then, after a node insertion
or a node deletion or an edge insertion or an edge deletion,
we can update the $1\pm\epsilon$ approximation to the solution
of graph regression in
$O\left( \frac{1}{\epsilon^2} \log^6 (1/\epsilon)  \right)$
time.
\end{corollary}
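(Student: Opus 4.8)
The plan is to derive both bounds by instantiating the general \textsf{CountSketch} machinery of Section~\ref{section:countsketch} at the concrete $m$-nearest neighborhood embedding and then collapsing every factor involving $m$, which is assumed to be a constant. This mirrors exactly the derivation of Corollary~\ref{corollary:phd} for the Hadamard-based algorithm.

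First I would invoke Theorem~\ref{theorem:cue}, which certifies that the embedding $\mathbold M$ of Definition~\ref{def:m-embedding} is \textsf{CUE}. This is precisely the hypothesis required by Theorems~\ref{theorem:countsketch_edge}, \ref{theorem_node_insertion}, and \ref{theorem:node_deletion}, so all four update operations---edge insertion, edge deletion, node insertion, and node deletion---are covered, and each of these theorems updates the approximate solution (preserving the $(1+\epsilon)$-guarantee inherited from Theorem~\ref{theorem:countsketch}) in $O(qm)$ time, where $q=O\!\left(\frac{m^2}{\epsilon^2}\log^6(m/\epsilon)\right)$ is the sample count of Equation~\ref{eq:count_sketch_sample_no}.

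Next I would substitute the value of $q$ and absorb the constant $m$. Since $m$ is a constant, $\log(m/\epsilon)=\Theta(\log(1/\epsilon))$ for $\epsilon\in(0,1)$, so $q=O\!\left(\frac{1}{\epsilon^2}\log^6(1/\epsilon)\right)$ and the update time $O(qm)$ collapses to $O\!\left(\frac{1}{\epsilon^2}\log^6(1/\epsilon)\right)$, matching the claimed bound. For the pre-processing phase I would apply Theorem~\ref{theorem:countsketch} with $\mathbold A=\mathbold M$. The one genuine bookkeeping observation is that $\mathbold M$ is $n\times m$ with exactly $m$ entries per row, so $nnz(\mathbold M)\le nm=O(n)$ when $m$ is constant; combined with $m^3\epsilon^{-2}\log^7(m/\epsilon)=O(\epsilon^{-2}\log^7(1/\epsilon))$, this yields the pre-processing bound $O\!\left(n+\epsilon^{-2}\log^7(1/\epsilon)\right)$.

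There is no deep obstacle here: the statement is a direct specialization of the preceding theorems, and the argument is essentially a substitution followed by discarding constants. The only points demanding care are the bookkeeping of the polylogarithmic factors when folding $m$ into the constants---specifically justifying $\log(m/\epsilon)=\Theta(\log(1/\epsilon))$---and confirming that $nnz(\mathbold M)$ scales linearly in $n$ and does not secretly depend on the edge count of $G$, which holds because each row of the $m$-nearest neighborhood embedding carries exactly $m$ nonzero entries regardless of the graph's density.
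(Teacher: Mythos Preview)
Your proposal is correct and follows essentially the same approach as the paper's proof: invoke the \textsf{CUE} property of the $m$-nearest neighborhood embedding, substitute the value of $q$ from Equation~\ref{eq:count_sketch_sample_no}, and discard all factors involving the constant $m$. In fact your write-up is more careful than the paper's, which only cites Theorem~\ref{theorem:countsketch_edge} (omitting the node-update theorems) and references Theorem~\ref{theorem:edge_update_efficient} rather than the more appropriate Theorem~\ref{theorem:cue}; your explicit treatment of $nnz(\mathbold M)=O(n)$ and of the polylog bookkeeping fills in details the paper leaves implicit.
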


\begin{proof}
In Theorem~\ref{theorem:countsketch_edge},
we conditioned on the existence of
a \textsf{CUE} matrix embedding and showed that
it takes $O(qd)$ time to update the approximate solution.
Then in Theorem~\ref{theorem:edge_update_efficient},
we showed the existence of this matrix embedding.
As a result and by replacing $q$
with its value
defined in
Equation~\ref{eq:count_sketch_sample_no}
and discarding all constants (including $d$),
we obtain the time complexities
stated in the theorem.
\end{proof}


\section{Discussion}
\label{sec:discussion}

When $d \ll n$, both of our randomized algorithms
outperform the exact algorithm of \cite{DBLP:journals/corr/abs-1903-10699}, in terms of pre-processing
and update times.
However, we shall also compare the two randomized
algorithms against each other.

\begin{itemize}
\item
Suppose that our randomized algorithms use
$d$-nearest neighborhood matrix embedding and
we discard the terms
$\ln \ln n$ and $\log^6(1/\epsilon)$
from the update time complexities
(due to having terms such as
$\ln n$ and $\epsilon^{-2}$).
Under these assumptions,
update time complexities of the first and second algorithms become
$O\left( \frac{\ln n}{\epsilon}\right)$ and
$O\left( \epsilon^{-2} \right)$, respectively.
Hence, if $\ln n \geq \epsilon^{-1}$,
the second algorithm finds a smaller
update time,
otherwise the first algorithm outperforms
the second algorithm in terms of update time.

Note that
in the general form and without
relaying on any specific matrix embedding,
our first algorithm
updates the $1\pm\epsilon$ approximation
in a sublinear time
in terms of $n$ (Theorem~\ref{theorem:phd_edge} of
Section~\ref{sec:phd_edge}).
However, when we use \textsf{CountSketch},
the update time becomes independent of $n$
(Theorems~\ref{theorem:countsketch_edge}, \ref{theorem_node_insertion} and \ref{theorem:node_deletion} of Section~\ref{sec:countsketch_alg}).
In particular, if we consider $d$ and $\epsilon$
as constants,
while update time of our first algorithm
is a sublinear of $n$ (it still depends on $n$),
our second algorithm updates the $1\pm\epsilon$
approximation in a constant time.
As a result and in addition to the useful
{\em sparsity} property of
\textsf{CountSketch} \cite{Clarkson:2017:LAR:3038256.3019134},
its another interesting property
discussed in this paper
is its {\em constant update time}
for all the update operations
node insertion, node deletion, edge insertion and edge deletion.

\item
Similar to the case of update times,
we may simplify pre-processing times
by assuming that the algorithms use
the $d$-nearest neighborhood matrix embedding.
We discard the terms
$\log \log n$ and $\log^7(1/\epsilon)$
from Equations~\ref{eq:phd_pre_time} and
\ref{eq:count_sketch_pre_time}.
Then, the
pre-processing time complexities of the
first and second algorithms become
$O\left(n+ \frac{\ln n}{\epsilon} \right)$ and
$O\left(n+\epsilon^{-2}\right)$, respectively.
Therefore
if $\epsilon^{-1} > \sqrt{n}$,
the first algorithm finds a smaller pre-processing time
than the second algorithm.
\end{itemize}

\section{Conclusion}
\label{sec:conclusion}

In this paper, we
presented sublinear update time
randomized algorithms for dynamic graph regression.
For a $n\times d$
efficiently updatable matrix embedding $\mathbold M$
where $d \ll n$,
our first algorithm is based on
subsampled randomized Hadamard transform and
supports
edge insertion and edge deletion.
It updates a
$1\pm \epsilon$ approximation of the optimal solution
in $O(rd)$ time, where
$r$ is a sublinear of $n$.
Our second algorithm
is based on \textsf{CountSketch}
and supports
edge insertion, edge deletion,
node insertion and node deletion.
It updates a $1\pm\epsilon$ approximation of
the optimal solution
in $O(qd)$ time,
where $q=O\left(\frac{d^2}{\epsilon^2} \log^6(d/\epsilon) \right)$.


\vskip 0.2in

\bibliographystyle{plain}
\bibliography{allpapers}   

\end{document}